\documentclass{article} 
\usepackage{iclr2026_conference,times}

\usepackage{amsmath,amsfonts,bm}

\def\eqref#1{equation~\ref{#1}}

\def\1{\bm{1}}

\DeclareMathAlphabet{\mathsfit}{\encodingdefault}{\sfdefault}{m}{sl}
\SetMathAlphabet{\mathsfit}{bold}{\encodingdefault}{\sfdefault}{bx}{n}

\usepackage{hyperref}
\usepackage{url}
\usepackage{multirow}
\usepackage{array}
\usepackage{graphicx}
\usepackage{algorithm}      
\usepackage{algorithmic}  
\usepackage{booktabs}
\usepackage{graphicx}
\usepackage{adjustbox}
\usepackage{kotex}
\usepackage{subcaption}
\usepackage{amsthm}

\newtheorem{theorem}{Theorem}[section]

\usepackage{subcaption}
\usepackage{color}

\title{Retaining Suboptimal Actions to \\ Follow Shifting Optima in \\ Multi-Agent Reinforcement Learning }

\author{Yonghyeon Jo, Sunwoo Lee, Seungyul Han\thanks{Correspondence to: Seungyul Han.} \\
Graduate School of Artificial Intelligence \\
  Ulsan National Institute of Science and Technology (UNIST) \\
  Ulsan, South Korea 44919 \\
\texttt{\{yonghyeonjo,sunwoolee,syhan\}@unist.ac.kr} \\
}
\iclrfinalcopy 
\begin{document}

\maketitle

\begin{abstract}
    Value decomposition is a core approach for cooperative multi-agent reinforcement learning (MARL). However, existing methods still rely on a single optimal action and struggle to adapt when the underlying value function shifts during training, often converging to suboptimal policies. To address this limitation, we propose Successive Sub-value Q-learning (S2Q), which learns multiple sub-value functions to retain alternative high-value actions. Incorporating these sub-value functions into a Softmax-based behavior policy, S2Q encourages persistent exploration and enables $Q^{\text{tot}}$ to adjust quickly to the changing optima. Experiments on challenging MARL benchmarks confirm that S2Q consistently outperforms various MARL algorithms, demonstrating improved adaptability and overall performance. Our code is available at https://hyeon1996.github.io/s2q/.
\end{abstract}

\section{Introduction}
Multi-agent reinforcement learning (MARL) has emerged as a powerful and versatile framework for solving complex sequential decision-making problems involving multiple interacting agents~\citep{canese2021multi, li2022applications, wen2022multi}. Its applicability spans a wide range of domains, including robotic control, autonomous driving and traffic management, intelligent manufacturing, as well as strategic and competitive environments such as real-time strategy games~\citep{canese2021multi, han2021max, li2022applications,orr2023multi, bahrpeyma2022review,vinyals2019grandmaster}. Within the centralized training with decentralized execution (CTDE) paradigm, where agents can leverage centralized information during training while acting independently at execution time, value-decomposition-based credit assignment methods~\citep{sunehag2017value, rashid2020qmix, wang2020qplex} have proven highly effective across many MARL benchmarks.

Nevertheless, conventional approaches face fundamental challenges due to the requirements of decentralized execution. In particular, QMIX~\citep{rashid2020qmix}, one of the most representative CTDE methods, enforces the Individual-Global-Max (IGM) condition~\citep{son2019qtran} through a monotonicity constraint, ensuring that the joint action-value function increases monotonically with respect to the utility of each agent. Although this design enables competitive performance across diverse tasks, it also restricts the expressiveness of the joint action-value function, limiting its ability to represent complex interactions among agents. Several lines of work have attempted to overcome this limitation. For instance, \citet{rashid2020weighted} introduces an unconstrained target network to improve the fidelity of optimal-action estimation, while \citet{zhou2022pac} incorporates auxiliary information to mitigate representational bottlenecks under partial observability. More recent studies reformulate the value function into forms that are more amenable to decomposition~\citep{shen2022resq, li2024concaveq}, further improving stability and performance.

Despite these advances, existing methods still struggle when the optimality of the value function shifts during training. This challenge arises not only from the representational rigidity imposed by the monotonicity constraint but also from the heavy reliance on $\epsilon$-greedy policy in the typically large joint action spaces of MARL. To address these issues, we propose Successive Sub-value Q-learning (S2Q), a novel MARL framework that successively learns multiple sub-value functions, each designed to identify a distinct suboptimal action. Leveraging these sub-value functions, S2Q replaces naive $\epsilon$-greedy policy with a Softmax-based strategy that encourages persistent visitation around promising joint actions. This allows S2Q to accurately detect shifts in the optimal action and rapidly adapt its policy, thereby avoiding premature convergence to suboptimal solutions. We further provide theoretical and empirical analyses to validate the effectiveness of our approach, demonstrating that S2Q achieves substantial improvements over other recent MARL methods on challenging benchmarks, including the StarCraft II Multi-Agent Challenge~\citep{smac} and Google Research Football~\citep{grf}.

\section{Preliminaries}
\label{sec:preliminaries}

\subsection{Decentralized POMDPs}
In cooperative multi-agent RL, the problem is formulated as a Decentralized Partially Observable Markov Decision Process (Dec-POMDP), represented by the tuple $G = \langle \mathcal{S}, \mathcal{A}, P, r, O, \mathcal{O}, N, \gamma \rangle$, where $\mathcal{S}$ denotes the state space, $\mathcal{A}$ the action space of each agent, $P$ the transition probability function, $r$ the reward function, $O$ the observation function, $\mathcal{O}$ the observation space, $N$ the number of agents, and $\gamma$ the discount factor\citep{oliehoek2016concise}. At each timestep $t$, the environment resides in a global state $s_t \in \mathcal{S}$, while agent $i$ receives a local observation $o_t^i := O(s_t) \in \mathcal{O}$. Based on its trajectory history $\tau_t^i := (o_0^i, a_0^i, \ldots, o_t^i)$, agent $i$ selects an action $a_t^i \in \mathcal{A}$ according to a decentralized policy $\pi^i$. After receiving the joint action $\mathbf{a}_t = (a_t^1, \ldots, a_t^N)$ from all agents, the environment transitions to the next state $s_{t+1} \sim P(\cdot \mid s_t, \mathbf{a}_t)$ and the reward $r_t:=r(s_t,\mathbf{a}_t)$. The overall objective of MARL is to maximize the expected discounted return, $\mathbb{E}[\sum_{t=0}^{\infty} \gamma^t r_t].$

\subsection{Value Decomposition under CTDE}
\label{sec:background_wqmix}

Under the Centralized Training with Decentralized Execution (CTDE) paradigm, agents act independently during execution while utilizing global state information during training. Among the most prominent methods in this setting is QMIX\citep{rashid2020qmix}, which employs a monotonic mixing function to enforce the Individual-Global-Max (IGM) condition, ensuring that maximizing individual utility $Q^i$ cannot reduce the joint value $Q^{\text{tot}}$. Although such alignment is beneficial, the monotonicity constraint limits the expressiveness of $Q^{\text{tot}}$, often hindering effective minimization of the temporal-difference (TD) loss. To address this limitation, Weighted QMIX (WQMIX)\citep{rashid2020weighted} introduces an auxiliary unconstrained joint action-value function $Q^*$:
\begin{equation}
\mathbb{E}_{(s_t, \boldsymbol{\tau}_t, \mathbf{a}_t) \sim \mathcal{B}}
\left[(Q^*(s_t, \boldsymbol{\tau}_t, \mathbf{a}_t) - y_t)^2\right],
\quad y_t = r_t + \gamma Q_{\mathrm{targ}}^*(s_{t+1},\boldsymbol{\tau}_{t+1}, \mathbf{a}'_{t+1}),
\label{eq:wqmix}
\end{equation}
where $\boldsymbol{\tau}_t = (\tau_t^1, \ldots, \tau_t^N)$ is the joint history, $Q_{\text{targ}}^*$ the target, and $a_t^{i\prime} = \arg\max Q^i(\tau_t^i, \cdot)$ the individual target action. WQMIX updates $Q^{\text{tot}}$ toward the target $y_t$ by adaptively weighting the TD error with $w(s_t,\mathbf{a}_t)$, where $w(s_t,\mathbf{a}_t)=1$ if $Q^{\text{tot}}(s_t,\boldsymbol{\tau}_t,\mathbf{a}_t) < y_t$ and $w(s_t,\mathbf{a}_t)=w_c<1$ otherwise, thereby prioritizing updates on underestimated actions.

\subsection{Communication in Dec-POMDPs}

In Dec-POMDPs, partial observability frequently necessitates inter-agent communication to enable effective coordination \citep{sukhbaatar2016learning}. A common approach is for each agent to transmit a message $m \in M$, where $M$ denotes the set of all possible messages and $m_t^i$ is the message received by agent $i$ at time $t$. These messages are typically incorporated into each agent’s individual utility function $Q^i(\tau_t^i, a_t^i, m_t^i)$, allowing agents to leverage shared information to improve decision-making. For example, MASIA \citep{guan2022efficient} employs an encoder–decoder architecture that extracts a latent representation $z_t$ from the joint observation and reconstructs the global state $s_t$ so that agents can communicate message given by functions of $z_t$. Most communication-based MARL approaches therefore require message exchange during both training and evaluation. In contrast, we consider a more flexible setting where communication is used only when necessary: evaluation can proceed without message passing in environments that do not require it, while in communication-critical tasks, agents can still exchange latent representations to coordinate effectively.

\section{Related works}
\vspace{-.5em}

\subsection{CTDE Methods in MARL}

\vspace{-.5em}
Within the CTDE paradigm, value decomposition methods decompose the joint action-value into individual utilities, ensuring scalable MARL \citep{sunehag2017value,rashid2020qmix,yang2020qatten}. Building on this foundation, recent works suggest more expressive identity representation \citep{naderializadeh2020graph, zang2023automatic, liu2023contrastive}, or assigning roles based on agent trajectories \citep{wang2020roma,wang2020rode,zeng2023effective}.
Another line of work extends exploration strategies studied in the single-agent setting \citep{tang2017exploration, burda2018exploration, han2021diversity, lee2025self} to the multi-agent domain under the CTDE paradigm, introducing centralized latent variables or coordination signals to capture richer joint behaviors \citep{mahajan2019maven, li2021celebrating, jo2024fox, park2025center}.
In parallel, several works focus on making real-world deployment more feasible via improving the sample efficiency, \citep{pmlr-v235-yang24c, qin2024dormant} and robustness of MARL algorithms \citep{yuan2023robust, lee2025wolfpack}. Finally, another line of research extends the actor-critic framework to the multi-agent setting by employing centralized critics with decentralized actors \citep{lowe2017multi,foerster2018counterfactual,iqbal2019actor,su2021value}.

\vspace{-.5em}
\subsection{Overcoming the Monotonicity Constraint}

\vspace{-.5em}
While ensuring tractability, QMIX’s monotonicity constraint significantly limits representational capacity, motivating active research to overcome this restriction. WQMIX \citep{rashid2020weighted} mitigates these limitations through weighted projections and non-monotonic targets. In addition, some methods leverage a centralized critic with no inherent constraints \citep{wang2020dop,zhang2021fop,peng2021facmac}, while recent works pursue more flexible factorizations of the joint action-value function \citep{son2019qtran,wang2020qplex, wan2021greedy, li2023race, li2024concaveq}, or incorporating alternative learning objectives \citep{zhou2022pac, hu2023mo, zhou2023value, li2023ace, liu2023n}. A representative example of such an objective is risk sensitivity, accounting for return variance \citep{qiu2021rmix,shen2023riskq,chen2024rgmdt}.

\vspace{-.5em}
\subsection{Communication in MARL}
\vspace{-.5em}
Communication has emerged as a critical mechanism in MARL for enabling coordination under partial observability. A range of works focus on the design of communication protocols, specifying how and when agents should exchange messages \citep{foerster2016learning,sukhbaatar2016learning,liu2020multi,hu2024learning}. Others investigate the type of information that should be shared to ensure that communication is informative \citep{li2023context,shao2023complementary}. Another group aims to improve communication efficiency by reducing redundancy or compressing messages \citep{wang2019learning,guan2022efficient}. Finally, selective communication strategies aim to determine which messages to send or whom to communicate with, thereby reducing unnecessary communication overhead \citep{das2019tarmac, yuan2022multi, zhu2024hypercomm, sun2024t2mac}.

\vspace{-.5em}

\section{Methodology}
\vspace{-.5em}
\subsection{Motivation: Overcoming Dynamic Optimality Shifts in MARL}
\vspace{-10pt}
\label{sec:motivation}
\begin{figure}[ht!]
    \centering
    \includegraphics[width=0.99\linewidth]{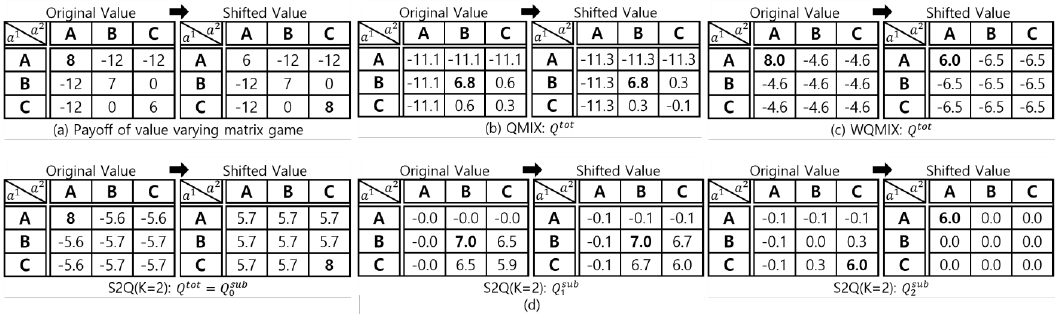}
    \vspace{-5pt}
    \caption{Fundamental Limitations of value decomposition algorithms. (a): The actual payoff of the matrix game. (b),(c): Training result of QMIX and WQMIX. (d): Training results of S2Q when $K=2$, where $K$ is the hyperparameter controlling number of sub-networks to use.}
    \label{fig:motivation}
\end{figure}
In this section, we revisit a key limitation of value decomposition under the CTDE paradigm. QMIX is known to struggle in representing the true global value due to the IGM constraint, and several methods have been proposed to mitigate this issue~\citep{rashid2020weighted}. Most focus on learning an unconstrained value function (e.g., WQMIX) or enhancing state-awareness for better inference of the global optimum. Yet, a core challenge remains: by focusing solely on a single optimal joint action, these methods still fail to recover the global optimum when the value function changes dynamically. 

To illustrate this issue, we consider the payoff matrix game in Fig.~\ref{fig:motivation}, consisting of a single state with two agents, each selecting actions from ${A,B,C}$. The payoff matrix specifies the joint value for each $(a^1,a^2)$. In Fig.~\ref{fig:motivation}(a, left), the optimal joint action is $(A,A)$ with value $8$, while $(B,B)$ and $(C,C)$ are suboptimal with values $7$ and $6$. After training the algorithms to convergence, we modify the payoff as shown in Fig.~\ref{fig:motivation}(a, right), where the optimum shifts to $(C,C)$ with value $8$, and $(A,A)$ and $(B,B)$ drop to $6$ and $7$. This setup mirrors realistic MARL scenarios in which exploration updates value estimates and causes the optimal action to change. Figures~\ref{fig:motivation}(b)–(c) show the learning behavior of QMIX~\citep{rashid2020qmix} and WQMIX~\citep{rashid2020weighted} before and after the payoff change. We use WQMIX as a representative example since the methods relying on enhanced state-awareness are expected to behave similarly to WQMIX, as the matrix game involves only a single state. The results reveal that QMIX fails in both cases because its monotonic mixing network cannot capture the non-monotonic value structure, while WQMIX uses an unconstrained target $Q^*$ and a weighted TD objective but still fails to adapt when the optimum shifts to $(C,C)$. These findings expose a key weakness, as existing methods often converge to suboptimal solutions because they are unable to track a moving optimum.

We attribute this failure to the fact that conventional methods do not explicitly track suboptimal actions. Once information about alternative high-value modes is discarded, the learner cannot adapt when those actions later become optimal. To overcome this limitation, we propose Successive Sub-value Q-learning (S2Q), a novel MARL framework that successively learns sub-value functions $Q_k^{\text{sub}},~ k=1,\dots,K$, which share the same architecture as $Q^{\text{tot}}$ but are each dedicated to capturing a distinct suboptimal action. When the optimal action changes, S2Q can immediately leverage the corresponding sub-value function and guide $Q^{\text{tot}}$ to adapt. As shown in Fig.~\ref{fig:motivation}(d) with $K=2$, under the original payoff matrix $Q_0^{\text{sub}}:=Q^{\text{tot}}$ learns the optimal $(A,A)$, while $Q_1^{\text{sub}}$ and $Q_2^{\text{sub}}$ capture $(B,B)$ (second-optimal) and $(C,C)$ (third-optimal). After the payoff is modified, $Q_2^{\text{sub}}$ identifies $(C,C)$ as optimal, enabling $Q^{\text{tot}}$ to rapidly pivot to the correct solution. Beyond adaptability, the maintained sub-value functions also support more effective exploration than standard $\epsilon$-greedy by actively sampling alternative high-value modes. To further illustrate S2Q’s exploration behavior, Appendix~\ref{sec:app_exploration} presents heatmaps of joint action probabilities showing that S2Q identifies and prioritizes valuable suboptimal actions and consequently adapts quickly when the value landscape shifts. Through this design, S2Q is expected to achieve faster convergence than existing CTDE methods, demonstrating its efficiency across diverse MARL environments. The next section details the successive learning scheme of the proposed S2Q.

\subsection{Successive Sub-value Q-learning for Retaining Suboptimal Actions}
\vspace{-10pt}
\label{sec:SQ}
\begin{figure}[h!]
    \centering
    \vspace{-.5em}
    \includegraphics[width=0.9\linewidth]{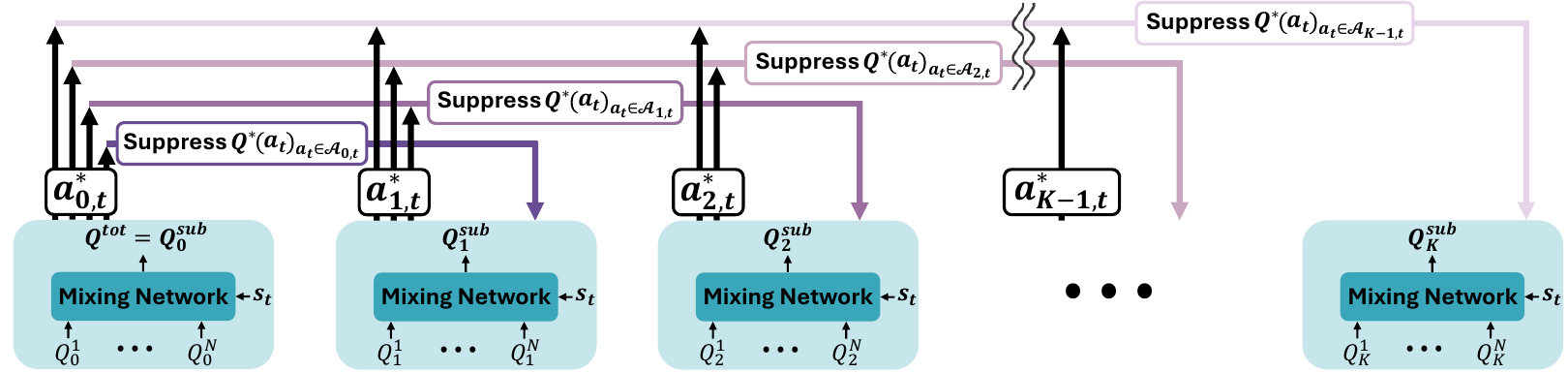}
    \caption{Illustration of S2Q framework. Each subnetwork $Q^{sub,k}$ transmits $\mathcal{A}^k$, a set of optimal actions according to all previous subnetworks. $Q^{sub,k+1}$ learns the unrestricted target $Q^*$ while suppressing the Q-values of actions included in $\mathcal{A}^k$.}
    \label{fig:sql}
    \vspace{-.5em}
\end{figure}
We now present the detailed formulation of S2Q, introduced in Section~\ref{sec:motivation}. S2Q trains multiple sub-value functions to follow $Q^*$ as in WQMIX, but with one key idea: the first sub-value function identifies the maximum joint action under the IGM condition, and the next sub-value function suppresses its value in the learning objective. This makes the action no longer the maximum, allowing the next sub-value function to learn the highest remaining joint action under IGM, effectively the next suboptimal action of $Q^*$. Repeating this process yields a sequence of high-value candidates, with each sub-value function specializing in a different region of the action space. 

Specifically, S2Q employs a sequence of sub-value functions $Q_k^{\text{sub}},~k=0,\dots,K$ to successively track optimal and suboptimal actions. By definition, we set $Q_0^{\text{sub}} := Q^{\text{tot}}$, where $Q^{\text{tot}}$ is trained to select the true optimal action, while each $Q_k^{\text{sub}}$ aims to capture the $k$-th suboptimal action by suppressing those identified by previous sub-value functions. All sub-value functions share the same mixture architecture as QMIX and satisfy the IGM condition. Specifically, each $Q_k^{\text{sub}}$ consists of independent individual utilities $Q_k^i$, and the joint greedy action $\mathbf{a}_{k,t}^*=(a_{k,t}^{1*},\dots,a_{k,t}^{N*})$ with $a_{k,t}^{i*}=\arg\max Q_k^i(\tau_t^i,\cdot)$ corresponds to the $k$-th suboptimal joint action.

To optimize each $Q_k^{\text{sub}}$, we build upon the TD-learning objective of WQMIX by leveraging the unrestricted $Q^*$ trained via \eqref{eq:wqmix} and incorporating a suppression term for previously identified suboptimal actions. The resulting objective is defined as:

\vspace{-15pt}
\begin{equation}
\mathbb{E}_{(s_t, \boldsymbol{\tau}_t, \mathbf{a}_t) \sim \mathcal{B}}
\Big[w_k(s_t,\mathbf{a}_t)\Big(Q_k^{\text{sub}}(s_t, \boldsymbol{\tau}_t, \mathbf{a}_t) -
\big(y_t - \alpha  \mathbb{I}(\mathbf{a}_t \in \mathcal{A}_{k-1,t}) \cdot \max(Q_{\text{targ}}^*(s_t, \boldsymbol{\tau}_t, \mathbf{a}_t), C)\big)\Big)^2\Big],
\label{eq:succesiveQ}
\end{equation}
\vspace{-10pt}

where $w_k$ is a WQMIX-based weighting function for each $k$, detailed in Appendix~\ref{sec:app_imple},  
$y_t = r_t + \gamma Q_{\text{targ}}^*(s_{t+1}, \boldsymbol{\tau}_{t+1}, \mathbf{a}_{0,t+1}^*)$ is the target from $Q^*$ in \eqref{eq:wqmix},  $\mathbb{I}$ is the indicator function, $\mathcal{B}$ is the replay buffer, and $\mathcal{A}_{k,t} = \{\mathbf{a}_{0,t}^*, \mathbf{a}_{1,t}^*, \dots, \mathbf{a}_{k,t}^*\}$ with $\mathcal{A}_{-1,t}=\emptyset$ denotes the set of previously identified suboptimal actions. The factor $\alpha$ controls how strongly the values of actions in $\mathcal{A}_{k-1,t}$ are suppressed, and $\max(\cdot, C)$ with a positive constant $C>0$ is applied to properly handle potentially negative values of $Q_{\mathrm{targ}}^*$. When $Q_{\mathrm{targ}}^*$ is positive, we can simply use $Q_{\mathrm{targ}}^*$ instead of $\max(Q_{\mathrm{targ}}^*, C)$ in \eqref{eq:succesiveQ} for implementation.  From the proposed successive Q-learning, each sub-value function $Q^{\mathrm{sub}}_k$ can select the $k$-th suboptimal joint action of a given $Q^*$ exactly, as guaranteed by the following theorem, provided that the suppression constant $\alpha$ is sufficiently large.

\begin{theorem}
Let $Q^*(s_t,\boldsymbol{\tau}_t,\mathbf a_t)$ and $\{Q^{\mathrm{sub}}_k\}_{k=0}^K$ be the joint action-value function and sub-value functions obtained by minimizing~\eqref{eq:wqmix} and \eqref{eq:succesiveQ}, respectively, and let 
$\{\mathbf a_{0,t}^*,\dots,\mathbf a_{K,t}^*\}$ denote the $K{+}1$ successive suboptimal joint actions of $Q^*$ for given $s_t$ and $\boldsymbol{\tau}_t$ at timestep $t$ s.t.
\[
Q^*(s_t,\boldsymbol{\tau}_t,\mathbf a_{0,t}^*) 
\ge \dots \ge 
Q^*(s_t,\boldsymbol{\tau}_t,\mathbf a_{K,t}^*) 
\ge Q^*(s_t,\boldsymbol{\tau}_t,\mathbf a_t),
\quad 
\forall \mathbf a_t \notin \{\mathbf a_{0,t}^*,\dots,\mathbf a_{K,t}^*\}.
\]
If the reward function $r$ is bounded and suppression factor $\alpha$ is sufficiently large, then,
\begin{equation}\label{eq:maximizer}
\mathbf a_{k,t}^* =
\arg\max_{\mathbf a_t} Q^{\mathrm{sub}}_k(s_t,\boldsymbol{\tau}_t,\mathbf a_t) \quad \forall k \in \{0,...,K\}.
\end{equation}
\end{theorem}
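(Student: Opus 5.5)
We argue by induction on $k$, fixing $(s_t,\boldsymbol{\tau}_t)$ and treating the minimizer of \eqref{eq:succesiveQ} as exactly representing the suppressed target at the point that matters. First we use the WQMIX result of \citet{rashid2020weighted}. When the weight $w$ satisfies $w(s_t,\mathbf a_t)=1$ for underestimated actions and $w_c<1$ otherwise, with $w_c$ small enough, the monotonic projection of a target recovers the target's argmax under IGM. We apply this to each sub-value function with the effective target
\[
\tilde y_{k}(\mathbf a_t) := Q^*(s_t,\boldsymbol\tau_t,\mathbf a_t) - \alpha\,\mathbb I(\mathbf a_t\in\mathcal A_{k-1,t})\max(Q^*(s_t,\boldsymbol\tau_t,\mathbf a_t),C).
\]
Here we identify $y_t$ with $Q^*$ in expectation, since $Q^*$ minimizes \eqref{eq:wqmix}, and take $Q^*_{\mathrm{targ}}=Q^*$ at convergence. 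This reduces the theorem to a statement about $\tilde y_k$: we must show that $\arg\max_{\mathbf a_t}\tilde y_k(\mathbf a_t)=\mathbf a^*_{k,t}$, and then transfer this to $Q^{\mathrm{sub}}_k$ through the WQMIX argmax-recovery property.

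The base case $k=0$ needs no new argument. Here $\mathcal A_{-1,t}=\emptyset$, so $\tilde y_0=Q^*$, and $Q^{\mathrm{sub}}_0=Q^{\text{tot}}$ is exactly WQMIX. Its greedy action is therefore $\mathbf a^*_{0,t}$.

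For the inductive step, assume that $\arg\max Q^{\mathrm{sub}}_j=\mathbf a^*_{j,t}$ for all $j<k$, so that $\mathcal A_{k-1,t}=\{\mathbf a^*_{0,t},\dots,\mathbf a^*_{k-1,t}\}$ is exactly the set of the top $k$ actions of $Q^*$. Because $r$ is bounded, $|Q^*|\le R_{\max}/(1-\gamma)=:B$. For $\mathbf a\in\mathcal A_{k-1,t}$ we have
\[
\tilde y_k(\mathbf a)\le Q^*(\mathbf a)-\alpha\max(Q^*(\mathbf a),C)\le B-\alpha C.
\]
For $\mathbf a\notin\mathcal A_{k-1,t}$, the target is unchanged, so $\tilde y_k(\mathbf a)=Q^*(\mathbf a)\ge -B$. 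Choosing $\alpha>2B/C$, every suppressed action falls strictly below every unsuppressed one. The maximizer of $\tilde y_k$ is then the maximizer of $Q^*$ over the complement of $\mathcal A_{k-1,t}$, which is $\mathbf a^*_{k,t}$ by the ordering hypothesis. The role of $\max(\cdot,C)$ is precisely to make the subtracted quantity at least $C>0$ even when $Q^*$ is negative, so that this gap is uniform. Applying the WQMIX recovery result to the target $\tilde y_k$ then gives $\arg\max Q^{\mathrm{sub}}_k=\mathbf a^*_{k,t}$, which completes the induction for $k\le K$.

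The main obstacle is justifying the transfer step for the suppressed target, since WQMIX's guarantee is stated for idealized weighting and a full-coverage replay distribution. To handle it, we would restate the WQMIX lemma in the form: for any target function $f$ with a unique maximizer $\mathbf a^\dagger$, the $w$-weighted monotonic projection with $w_c$ small satisfies $\arg\max=\mathbf a^\dagger$. This requires $w_k$ to assign weight $1$ to $\mathbf a^*_{k,t}$, or to underestimated actions, under the suppressed target, which we take from the weighting defined in Appendix~\ref{sec:app_imple}. The argmax in \eqref{eq:maximizer} should be read as unique, and ties in the ordering of $Q^*$ are excluded or broken consistently across all $k$.
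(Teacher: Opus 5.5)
Your proposal is correct and shares the paper's core step, but it handles the final transfer step differently.

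\textbf{Common core.} Both arguments use the same inequality: a large $\alpha$ pushes every action in $\mathcal A_{k-1,t}$ below $\mathbf a^*_{k,t}$. In both, the floor $C$ together with bounded rewards makes the required $\alpha$ finite. The paper imposes a state-dependent condition, $\alpha\ge\max_{\tilde{\mathbf a}\in\mathcal A_{k-1,t}}\bigl(Q^*(\tilde{\mathbf a})-Q^*(\mathbf a^*_{k,t})\bigr)/\max(Q^*(\tilde{\mathbf a}),C)$, and then takes a supremum over $(s_t,\boldsymbol\tau_t)$ and $k$. Your uniform threshold $\alpha>2B/C$ is an upper bound on that supremum. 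Because yours is strict, it also gives a unique argmax, which the paper's non-strict condition does not.

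\textbf{Transfer step.} The paper asserts that at convergence $Q^{\mathrm{sub}}_k$ equals the suppressed target $\tilde y_k$ pointwise and reads the argmax off directly. This treats each sub-value function as unconstrained, even though the paper gives it a monotonic QMIX architecture. You respect that architecture by invoking WQMIX's argmax-recovery theorem. The cost is that you inherit hypotheses absent from the statement:
\begin{itemize}
\item $w_c$ must be sufficiently small. It has to be chosen after $\alpha$, since $\tilde y_k$ depends on $\alpha$. It would also need to work uniformly over states, which WQMIX's existence result does not obviously give. The paper's experiments use $w_c\in\{0.75,0.9\}$, which are not small.
\item The maximizers must be unique.
\item The weighting must match WQMIX's schemes, and it only partly does.
\end{itemize}
On the last point, your base case is not ``exactly WQMIX''. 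The weight $w_0$ is $1$ on actions with $Q^*\ge\max_{\mathbf a\in\mathcal A_{K,t}}Q^*(\mathbf a)$. That is an idealised-central-type rule depending on all sub-value functions, including $Q^{\mathrm{sub}}_0$ itself. So what you need is WQMIX's idealised-central theorem at a self-consistent fixed point, not the OW-QMIX result. For $k\ge1$, the weighting compares against $Q^*_{\mathrm{targ}}$ rather than $\max(Q^*_{\mathrm{targ}},C)$. It is therefore optimistic with respect to $\tilde y_k$ on suppressed actions only where $Q^*\ge C$.

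\textbf{Gain in rigor.} Your explicit induction on $k$ fills a step the paper skips. $\mathcal A_{k-1,t}$ is built from the greedy actions of the earlier sub-value functions. The paper silently identifies it with the top-$k$ actions of $Q^*$ through overloaded notation, and your inductive hypothesis is exactly what justifies that identification.
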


\textbf{Proof)} Proof of Theorem 4.1 is provided in Appendix~\ref{sec:app_proof}.

According to Theorem~4.1, it is guaranteed that, with bounded rewards and a sufficiently large suppression factor $\alpha$, the successive learning procedure maintains $Q_0^{\text{sub}},\dots,Q_K^{\text{sub}}$, where $Q_0^{\text{sub}}$ represents the global optimum and the remaining sub-value functions capture successive suboptimal actions. By explicitly tracking the high-value joint actions, S2Q allows $Q^*$ to effectively update its values and ensures that $Q_0^{\text{sub}}$ can promptly adapt when the optimal action changes. Fig.~\ref{fig:sql} illustrates this process, where each $Q_k^{\text{sub}}$ suppresses previously identified actions before selecting the next best candidate. Importantly, although this procedure is presented in the context of WQMIX, it is general and can be applied to any CTDE method by replacing $Q^*$ with $Q^{\text{tot}}$. Consequently, S2Q can follow changes in the value landscape more closely and adapt to new optima faster than conventional approaches.

\vspace{-.5em}
\subsection{Coordinated Execution via Communication during Training}
\label{sec:comm}

\begin{figure}[htb]
\centering
\begin{minipage}[t]{0.48\textwidth}
\vspace{-6pt}
    \centering
    \includegraphics[width=\linewidth]{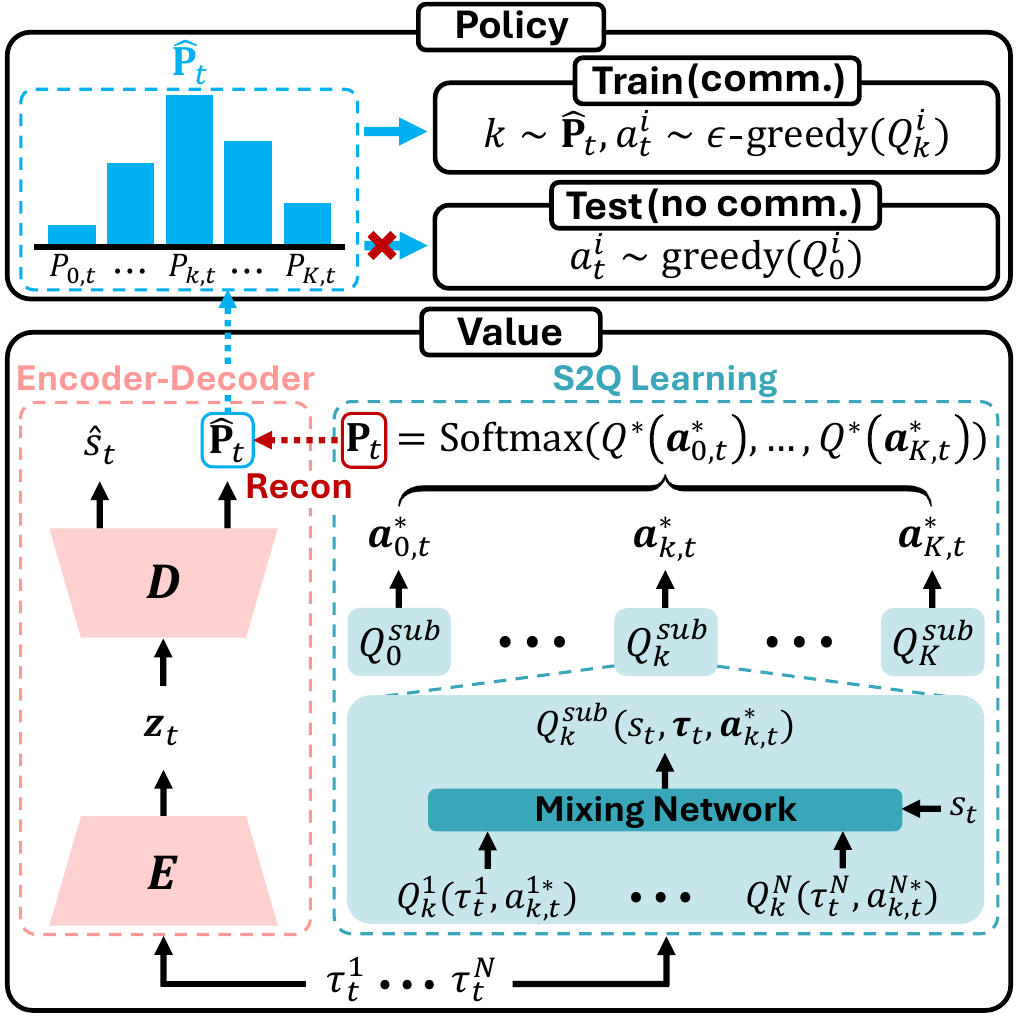}
    \vspace{-15pt}
    \caption{Overall framework of S2Q}
    \label{fig:framework}
\end{minipage}%
\hfill
\begin{minipage}[t]{0.48\textwidth}
\vspace{-15pt}
\begin{algorithm}[H] \caption{S2Q Framework} \begin{algorithmic}[1] \STATE Initialize $Q^*$, $[Q^{sub}_k]^K_{k=0}$, $(E, D)$ \FOR{each training iteration} \FOR{each environment step $t$} \IF{training} \STATE Obtain $z_t = E(\boldsymbol{\tau}_t)$ \STATE Obtain $\hat{\mathbf{P}}_t, \hat{s}_t = D(z_t)$ \STATE Sample $k \sim \hat{\mathbf{P}}_t$ \STATE Sample $\mathbf{a}_t \sim \epsilon\text{-greedy} (Q^{sub}_k)$ \ELSE \STATE Sample $\mathbf{a}_t \sim \text{greedy} (Q^{sub},0)$ \ENDIF \ENDFOR \STATE Compute target $y$ \STATE Update $Q^*$, $Q^{sub}_k$ via Equation~\ref{eq:wqmix}, \ref{eq:succesiveQ}\STATE Update $E$, $D$  \ENDFOR \end{algorithmic} \label{alg:alg} \end{algorithm}
\end{minipage}
\end{figure}

\vspace{-5pt}

As described above, S2Q continuously tracks suboptimal actions, but for these actions to contribute effectively, $Q^*$ must be trained toward global convergence. However, prior MARL methods typically rely on agent-wise $\epsilon$-greedy exploration, under which the probability of joint exploration decreases exponentially with the number of agents, causing most joint actions to remain near the current optimum. To overcome this limitation, we explicitly execute tracked suboptimal actions with priority determined by a Softmax distribution $\mathbf{P}_t$ over their $Q^*$ values, thereby ensuring more frequent visits, enabling $Q^*$ to find better optimal and suboptimal actions, defined as

\vspace{-10pt}
\begin{equation}
\mathbf{P}_t = (P_{0,t}, \dots, P_{K,t}) := \text{Softmax}\Big(Q^*(s_t, \boldsymbol{\tau}_t, \mathbf{a}_{0,t}^*)/T, \dots, Q^*(s_t, \boldsymbol{\tau}_t, \mathbf{a}_{K,t}^*)/{T}\Big),
\end{equation}

where $T$ is a temperature parameter. The behavior policy first samples $k \sim \mathbf{P}_t$, and then executes actions from $Q_k^{\text{sub}}$ according to $\epsilon$-greedy rule, i.e., $a_t^i \sim \epsilon\text{-greedy}(Q_k^i)$. This design first selects $k$ based on $Q^*$ and then explores around the corresponding suboptimal action, and we empirically show in Appendix~\ref{sec:app_exploration} that S2Q visits a wider range of spaces than conventional $\epsilon$-greedy exploration.

Exact computation of $\mathbf{P}_t$, however, requires access to global information that decentralized agents cannot directly observe. More importantly, since all agents must select the same sub-value function index $k$ in order to execute $Q_k^{\text{sub}}$ consistently, coordination among agents is necessary. To resolve these challenges, we introduce communication during training, allowing agents to estimate $\mathbf{P}_t$ jointly and synchronize their choice of $Q_k^{\text{sub}}$. For this purpose, we adopt an encoder–decoder architecture for general representation learning, following prior communication methods \citep{guan2022efficient, wang2019learning}. Specifically, encoder $E$ maps local histories $\boldsymbol{\tau}_t$ into a latent representation $z_t = E(\boldsymbol{\tau}_t)$, and the decoder $D$ reconstructs both the global state and an approximate distribution $(\hat{s}_t,\hat{\mathbf{P}}_t) = D(z_t)$. Agents then synchronize on the same $k$ sampled from $\hat{\mathbf{P}}_t$ instead of $\mathbf{P}_t$ and execute the corresponding $Q_k^{\text{sub}}$, ensuring consistent and accurate use of sub-value functions.

At test time, by contrast, communication is not required in the default setup of S2Q. Since $Q_0^{\text{sub}} = Q^{\text{tot}}$ alone suffices to produce the greedy optimal action $\mathbf{a}_{0,t}^*$, evaluation remains fully decentralized and communication-free. This provides a practical advantage over conventional communication methods that rely on message passing even during evaluation. Nevertheless, in environments where communication is indispensable for task success, we also consider a variant, denoted S2Q-Comm, in which the latent $z_t$ is provided to each $Q_k^i$ during both training and execution. Through this practical design, S2Q integrates successive sub-value functions with a Softmax-based behavior policy, enabling $Q^{\text{tot}}$ to adapt quickly when the optimal action changes. The overall S2Q framework is illustrated in Fig.~\ref{fig:framework}, where each local history $\tau^i_t$ is processed along the S2Q Learning module, and the Encoder-Decoder module to generate the softmax probabilities $\mathbf{P}_t$, and its prediction $\hat{\mathbf{P}}_t$. During training, agents use $\hat{\mathbf{P}}_t$ to synchronize their selection of the sub-value function $Q_k^{\text{sub}}$, while evaluation relies solely on the local outputs. The training procedure is summarized in Algorithm~\ref{alg:alg}, and further details on loss functions and implementation are provided in Appendix~\ref{sec:app_imple}. 

\section{Experiments}
\label{sec:experiments}
\begin{figure}[h!]
    \centering
    \begin{subfigure}[b]{0.3\textwidth}
        \centering
        \includegraphics[width=0.9\textwidth]{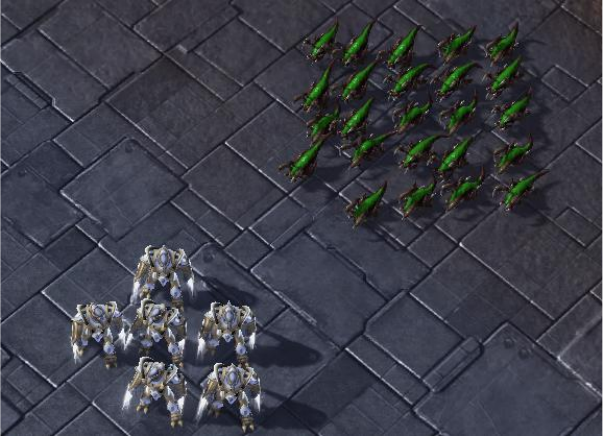}
        \caption{SMAC-Hard+: Corridor}
        \label{fig:env_smac}
    \end{subfigure}
    \hfill
    \begin{subfigure}[b]{0.3\textwidth}
        \centering
        \includegraphics[width=0.9\textwidth]{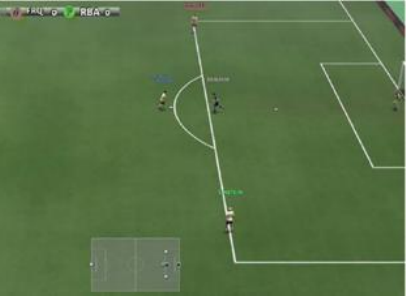}
        \caption{GRF: academy\_3\_vs\_2}
        \label{fig:env_grf}
    \end{subfigure}
    \hfill
    \begin{subfigure}[b]{0.3\textwidth}
        \centering
        \includegraphics[width=0.9\textwidth]{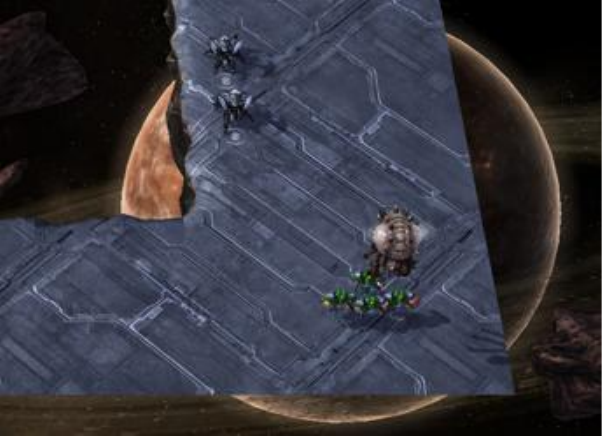}
        \caption{SMAC-Comm: 1o\_2r\_vs\_4r}
        \label{fig:env_comm}
    \end{subfigure}
    
    \caption{Experiment environments}
    \label{fig:env}
\end{figure}

In this section, we evaluate S2Q on two widely used MARL benchmarks: the StarCraft Multi-Agent Challenge (SMAC)~\citep{smac}, which involves micromanagement tasks in StarCraft II, and Google Research Football (GRF)~\citep{grf}, which features cooperative soccer-like scenarios with an opponent team including a goalkeeper. As shown in Fig.~\ref{fig:env}(a), we consider the SMAC-Hard+ suite, consisting of one hard map (\texttt{5m\_vs\_6m}) and five super hard maps (\texttt{MMM2}, \texttt{27m\_vs\_30m}, \texttt{corridor}, \texttt{6h\_vs\_8z}, \texttt{3s5z\_vs\_3s6z}), all of which require a high degree of coordination among agents. Fig.~\ref{fig:env}(b) illustrates the GRF setups, including \texttt{academy\_3\_vs\_2} and \texttt{academy\_4\_vs\_3}, where the number of agents and team formations are varied. In addition, Fig.~\ref{fig:env}(c) depicts the SMAC-Comm suite, which explicitly requires agent communication for successful task completion, including \texttt{1o\_10b\_vs\_1r}, \texttt{1o\_2r\_vs\_4r}, \texttt{5z\_vs\_1ul}, and \texttt{bane\_vs\_hM}. Further details of the environmental setup are provided in Appendix~\ref{sec:app_env}. For all reported result tables and plots, we present the mean and standard deviation across five random seeds.

\subsection{Performance Analysis}
\label{sec:exp_perf}

For performance evaluation, we consider communication-free tasks from \texttt{SMAC-Hard+} and \texttt{GRF}, along with communication-demanding scenarios from \texttt{SMAC-Comm}, with appropriate baselines for each setting to ensure fair comparison. In the \texttt{SMAC-Hard+} and \texttt{GRF} benchmarks, we compare S2Q with QMIX and methods addressing the limitations of monotonic value decomposition, including WQMIX\footnote{Throughout our experiments, we adopt OW-QMIX variant of WQMIX \citep{rashid2020weighted}.}, which leverages $Q^*$ for more accurate value estimation; DOP~\citep{wang2020dop} and FOP~\citep{zhang2021fop}, which extend the actor–critic paradigm to promote better global coordination; PAC~\citep{zhou2022pac} and RiskQ~\citep{shen2023riskq}, which integrate counterfactual prediction or risk-aware objectives; and MARR~\citep{pmlr-v235-yang24c}, which improves sample efficiency through reset mechanisms. We also include MASIA~\citep{guan2022efficient}, which reconstructs global state information from local observations, to evaluate the effect of communication. For the \texttt{SMAC-Comm} scenarios, where information exchange among agents is critical under partial observability, we adopt S2Q-Comm, a variant that provides the latent $z_t$ to each agent’s utility $Q^i(\tau_t^i,z_t,a_t^i)$, and compare it with QMIX and QMIX-Comm, a variant of QMIX augmented with learned latent information produced by an encoder–decoder structure that reconstructs the state, FullCom, which augments QMIX with full observation sharing, and recent communication-focused MARL methods such as MASIA, NDQ~\citep{wang2019learning}, and CAMA~\citep{shao2023complementary}, which enhance communication via information-theoretic regularization or complementary attention, as well as MAIC~\citep{yuan2022multi} and T2MAC~\citep{sun2024t2mac}, which employ selective and targeted communication for improved coordination efficiency. All algorithms are implemented using the authors’ official code, and S2Q is trained with the best-performing hyperparameter ($K=2$, $T=0.1$). Since the $Q$-values quickly turn positive in environments we consider, we adopt $Q_{\mathrm{targ}}^*$ instead of $\max(Q_{\mathrm{targ}}^*, C)$ in \eqref{eq:succesiveQ} for implementation, as mentioned in Section~\ref{sec:SQ}. Additional experimental details, including the hyperparameter setup of S2Q are provided in Appendix~\ref{app:detail}.

\begin{figure*}[ht!]
    \centering
    \includegraphics[width=0.95\linewidth]{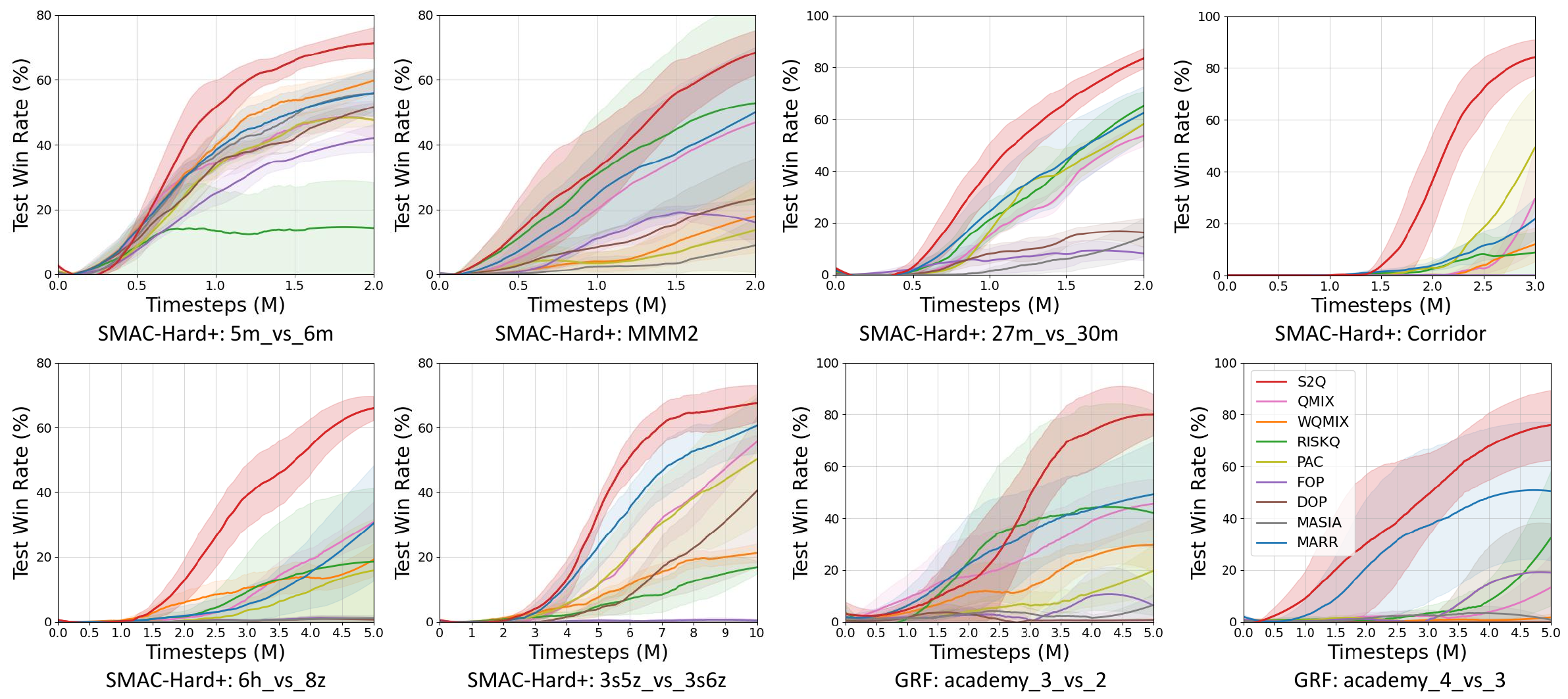}
\caption{Performance comparison: Average test win rates in the SMAC-Hard+ and GRF tasks}    \label{fig:perf_no_comm}
\end{figure*}

\textbf{SMAC-Hard+ and GRF}: Fig.~\ref{fig:perf_no_comm} compares performance on SMAC-Hard+ and GRF. S2Q consistently outperforms existing baselines, achieving faster convergence and higher asymptotic returns. This advantage is most evident in exploration-intensive scenarios such as \texttt{6h\_vs\_8z} and \texttt{3s5z\_vs\_3s6z}, where conventional methods adapt slowly to the optimal joint action shifts, as shown in the payoff matrix experiment in Section~\ref{sec:motivation}. By continually tracking suboptimal actions with successive sub-value functions, S2Q allows $Q^{\text{tot}}$ to rapidly adjust toward the new optimum, enabling more efficient exploration and quicker convergence, even in the challenging GRF tasks.

\begin{figure*}[ht!]
    \centering
    \includegraphics[width=0.95\linewidth]{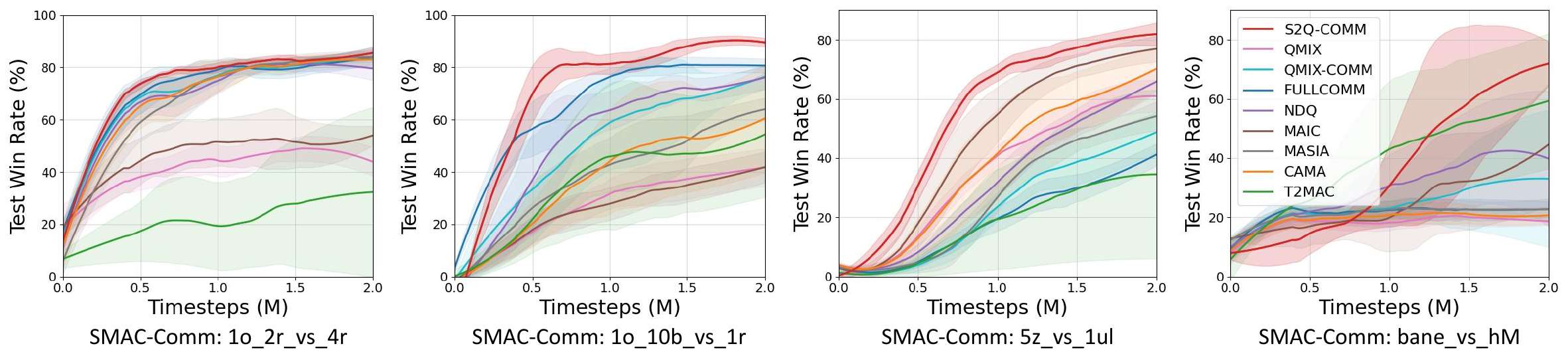}
\caption{Performance comparison: Average test win rates in the SMAC-Comm tasks}    
\label{fig:perf_comm}
\end{figure*} 

\textbf{SMAC-Comm}: Fig.~\ref{fig:perf_comm} presents results for communication-critical tasks. S2Q-Comm performs comparably to baselines in the simpler \texttt{1o\_2r\_vs\_4r} scenario, but demonstrates substantial gains in more challenging settings such as \texttt{5z\_vs\_1ul} and \texttt{bane\_vs\_hM}, where tight coordination and real-time adaptation are required. Synchronized sub-value function selection through the learned latent $z_t$ enables agents to consistently exploit informative suboptimal actions, leading to more efficient cooperation. In addition, comparison with QMIX-Comm shows that these improvements arise from S2Q’s successive sub-value learning, rather than relying solely on information sharing. These findings suggest that S2Q-Comm can serve as a general approach for leveraging communication efficiently, providing a promising direction for scaling to larger teams and more complex partially observable environments. 

Together, these experiments show that by continually tracking and exploiting suboptimal actions, S2Q achieves faster convergence and more efficient learning than state-of-the-art MARL methods. We further evaluate S2Q on additional stochastic environments, including SMACv2~\citep{smacv2}, which randomizes team compositions, {and SMAC-Hard~\citep{deng2024smac}, with mixed scripted opponents. For clarity, we refer to the latter benchmark as SMAC-Hard (Mixture Opponent) to distinguish it from the \texttt{SMAC-Hard+} environment. Despite the increased stochasticity, S2Q consistently outperforms all baselines, suggesting that our method can effectively address stochastic dynamics to a meaningful extent.} Furthermore, we extend S2Q to other CTDE methods such as VDN~\citep{sunehag2017value} and QPLEX~\citep{wang2020qplex}, observing similarly consistent performance gains. Detailed results for these additional experiments are provided in Appendix~\ref{sec:app_add}.

\subsection{Trajectory analysis}
\label{sec:ta}

\begin{figure*}[ht!]
    \centering
    \includegraphics[width=\linewidth]{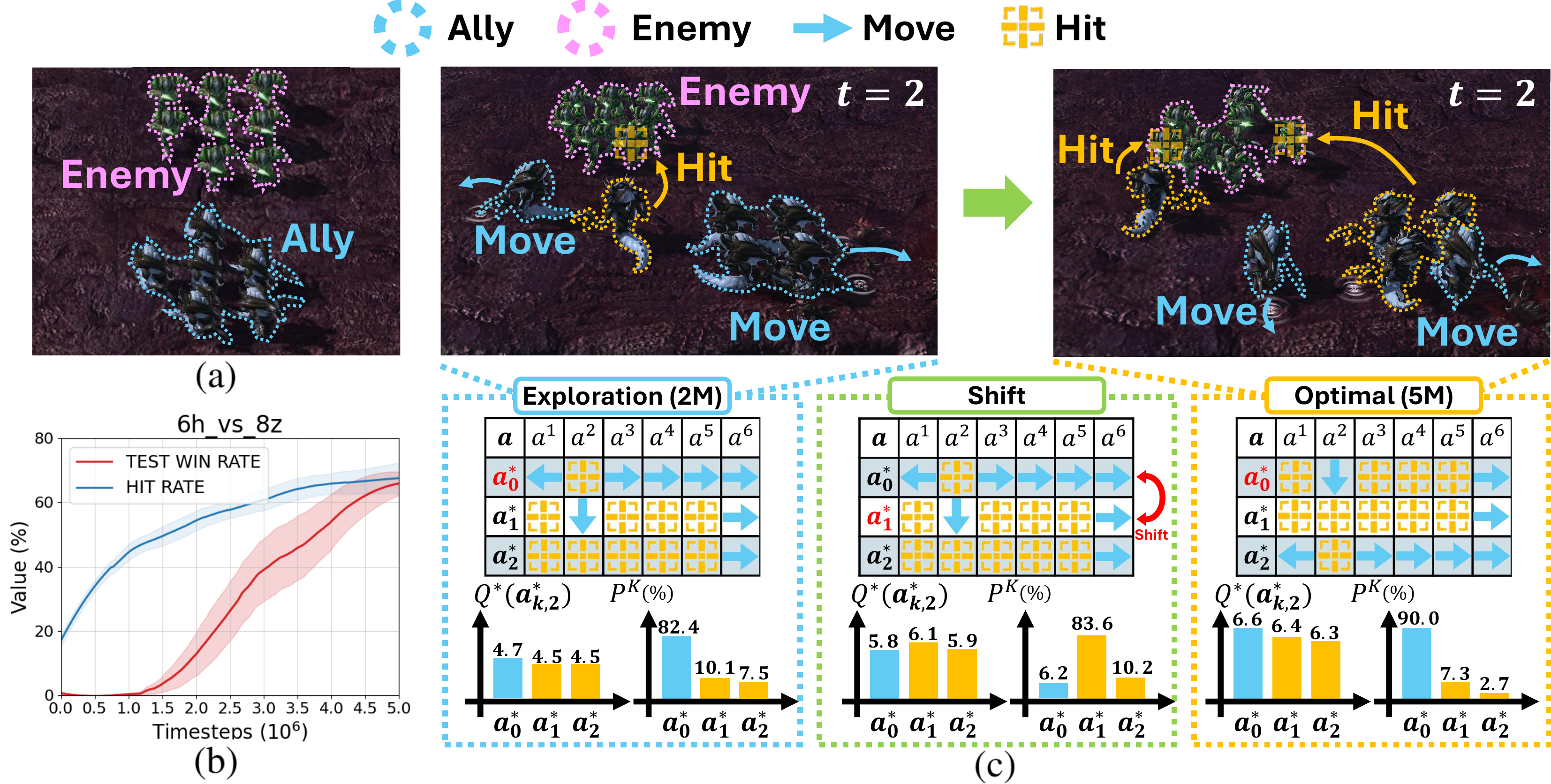}
\caption{Training behavior of S2Q. Left: training step = 2M. Right: training step = 5M}
\label{fig:ta}
\end{figure*}

To investigate the training dynamics of S2Q ($K=2$), we analyze the distribution of agents’ actions in the \texttt{6h\_vs\_8z} scenario, shown in Fig.~\ref{fig:ta}(a). Fig.~\ref{fig:ta}(b) reports the evolution of hit rate and win rate over training. Since SMAC tasks require agents to defeat the opposing team, the dominant actions are \textit{move} and \textit{hit}. Early in training, agents tend to prioritize survival, favoring \textit{move} and exhibiting low hit rates. However, as training progresses, the hit rate gradually increases, leading to higher win rates, indicating that the optimal policy shifts from evasive behavior towards more aggressive strategies where \textit{hit} dominates. 
Fig.~\ref{fig:ta}(c) illustrates how S2Q adapts to this shift in optimality by leveraging its successive sub-value functions. Initially, \textit{move} is considered the optimal action ($\mathbf{a}_{0,t}^*$), while \textit{hit} actions ($\mathbf{a}_{1,t}^*$, $\mathbf{a}_{2,t}^*$) are tracked as suboptimal. As training proceeds and $Q^*$ recognizes that \textit{hit} yields higher returns, S2Q gradually increases the execution frequency of \textit{hit} via the Softmax-based behavior policy. This enables $Q_0^{\text{sub}}$ to promptly adjust to the newly optimal action, resulting in an increase in hit rate and ultimately higher win rates. These results provide a clear illustration of how S2Q tracks suboptimal actions and uses them to adapt efficiently when the optimal action changes. Additional trajectory analysis for the \texttt{SMAC-Comm,} \texttt{SMAC-Hard (Mixture Opponent)}, and \texttt{GRF} environments are also provided in Appendix~\ref{sec:app_trajectory}. While detailed analyses are provided in Appendix \ref{sec:app_trajectory}, we observe a consistent trend across environments: agents frequently begin by selecting locally greedy actions conditioned on their initial states, but gradually transition toward the true optimal actions as training progresses.

\subsection{Ablation study}
\label{sec:ablation}
In this section, we present ablation studies evaluating the contributions of S2Q’s core components and hyperparameters. Component analysis is averaged across all scenarios in SMAC-Hard+, while hyperparameter analysis is conducted on \texttt{6h\_vs\_8z}, where S2Q’s benefits are most evident. Additional results, including additional hyperparameter analysis in \texttt{MMM2} and \texttt{academy\_4\_vs\_3}, which exhibit trends consistent with those observed in \texttt{6h\_vs\_8z}, ablations on the suppression constant $\alpha$, the weighting factor $w_c$ for WQMIX-based TD learning, and computational complexity, are reported in Appendix~\ref{sec:app_add_abl}, to validate the effectiveness and generality of S2Q across diverse settings.

\begin{figure}[h!]
\centering
\begin{minipage}[h!]{0.35\textwidth}
\renewcommand{\arraystretch}{1.3} 
\centering
\captionof{table}{Component evaluation of S2Q on SMAC-Hard+ tasks.}
\vspace{-5pt}
\begin{adjustbox}{max width=\linewidth}
\begin{tabular}{lc}
\toprule
\toprule
Method & Avg. Win Rate(\%) \\
\midrule
S2Q          & 73.43 $\mathbf{\pm}$ 5.29 \\
S2Q\_oracle & \textbf{77.47} $\mathbf{\pm}$ \textbf{4.32}  \\
S2Q\_independent & 46.22 $\mathbf{\pm}$ 8.20 \\
S2Q\_no\_wTD & 70.59 $\mathbf{\pm}$ 4.78 \\
S2Q\_no\_soft & 55.17 $\mathbf{\pm}$ 6.71 \\
S2Q\_random   & 48.05 $\mathbf{\pm}$ 9.37 \\
QMIX           & 43.94 $\mathbf{\pm}$ 10.06 \\
\bottomrule
\bottomrule
\end{tabular}
\end{adjustbox}
\label{tb:comp}
\end{minipage}
\hfill
\begin{minipage}[h!]{0.63\textwidth}  
    \captionsetup{type=figure}
    \centering
    \begin{subfigure}[b]{0.49\textwidth}
        \centering
        \includegraphics[width=0.9\textwidth]{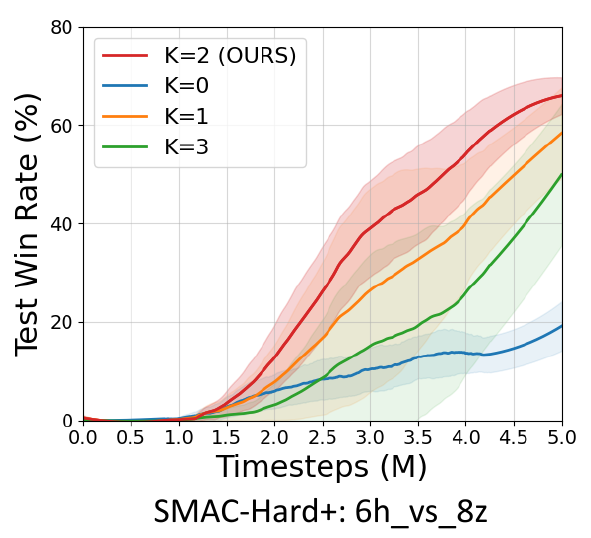}
        \vspace{-5pt}
        \caption{ }
        \label{fig:abl_K}
    \end{subfigure}
    \hfill
    \begin{subfigure}[b]{0.49\textwidth}
        \centering
        \includegraphics[width=0.9\textwidth]{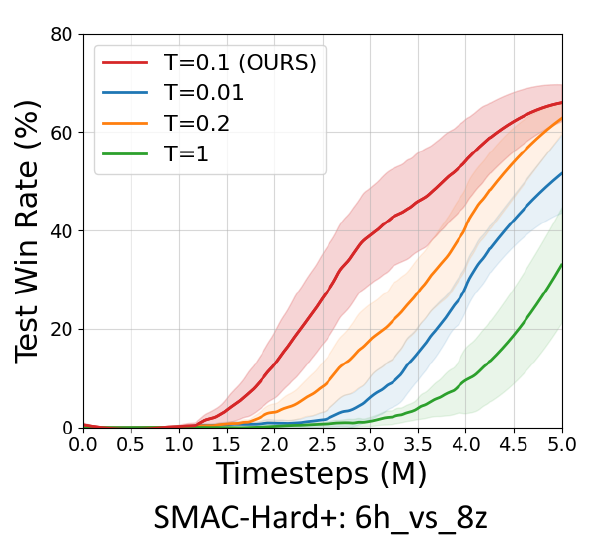}
        \vspace{-5pt}
        \caption{ }
        \label{fig:abl_T}
    \end{subfigure}
    \vspace{-2pt}
    \caption{Hyperparameter analysis. Effect of (a) the number of sub-networks $K$ and (b) Softmax temperature $T$.}
    \label{fig:abl}
\end{minipage}
\end{figure}

\textbf{Component evaluation}: To evaluate the contribution of each component in S2Q, we design five ablations: \textbf{S2Q\_oracle} uses the true softmax distribution $\mathbf{P}_t$ instead of the estimated $\hat{\mathbf{P}}_t$; \textbf{S2Q\_independent} samples $k$ independently, removing coordinated execution; \textbf{S2Q\_no\_wTD} removes the weighted TD learning by replacing $Q^*$ with $Q_0^{\text{sub}}$; \textbf{S2Q\_no\_soft} eliminates Softmax-based execution, always acting according to $Q_0^{\text{sub}}$; and \textbf{S2Q\_random} uniformly samples $k$ rather than using Softmax probabilities. Table~\ref{tb:comp} reports the average win rate across six SMAC-Hard+ scenarios. 

The results show that 'S2Q\_oracle' serves as a reference upper bound, since it can  access to the true softmax distribution $\mathbf{P}$ and therefore achieves higher performance than S2Q. In contrast, 'S2Q\_independent' exhibits a substantial performance drop relative to S2Q because it cannot accurately estimate the selection probabilities and consequently fails to maintain proper synchronization across agents. Together, these ablations clearly demonstrate the critical role of accurately estimating the softmax probabilities in enabling effective coordinated sub-value selection. In addition, `S2Q\_no\_wTD' maintains strong performance, indicating that successive sub-value learning and its execution are more critical to performance than the auxiliary weighting scheme. In contrast, `S2Q\_no\_soft' suffers a noticeable drop in win rate, showing that simply retaining suboptimal actions without prioritizing their execution is insufficient. `S2Q\_random' further degrades performance, as ignoring the relative importance of different sub-value functions dilutes learning. These findings highlight that the proposed successive sub-value learning and the Softmax-guided execution are indispensable for enabling S2Q to achieve rapid convergence toward high-quality joint policies.

\textbf{Number of sub-value functions $K$}: Controls the number of sub-value functions to learn. Fig. \ref{fig:abl}(a) shows performance for $K \in [0, 1, 2, 3]$. The results demonstrate that even with relatively small values of $K$, S2Q can effectively address diverse suboptimal actions in complex environments with large joint action spaces, such as SMAC, yielding rapid convergence and high final win rates. In particular, $K=2$ achieves the best performance, reaching nearly 70\% win rate. On the other hand, setting $K=0$ fails to capture meaningful suboptimal actions, thereby limiting the effectiveness of exploration. Conversely, a larger value, such as $K=3$, may introduce excessive variance and destabilize learning. These findings indicate that a moderate number of candidate sub-networks achieves a favorable trade-off between diversity and stability, maximizing the effectiveness of S2Q.

\textbf{Softmax temperature $T$}: The temperature $T$ regulates the sharpness of the Softmax distribution $\mathbf{P}_t$. Fig.~\ref{fig:abl}(b) shows the performance for $T \in \{0.01, 0.1, 0.2, 1.0\}$. The results indicate that $T=0.1$ yields the best balance between convergence speed and final win rate. A very small temperature ($T=0.01$) produces overly deterministic behavior early in training, limiting exploration and slowing progress toward the optimum. In contrast, a high temperature ($T=1.0$) promotes excessive exploration, delaying convergence and occasionally leading to suboptimal plateaus. Interestingly, $T=0.2$ also performs competitively, suggesting that S2Q is relatively robust to moderate changes in temperature, as long as sampling remains focused enough to prioritize promising sub-values.

\section{Limitations}
While S2Q shows strong performance across diverse MARL environments, using multiple sub-value functions increases computation and memory requirements. However, our analysis in Appendix~\ref{sec:app_complexity} shows that this overhead is moderate relative to the performance gains, making the trade-off favorable in most practical settings where faster convergence and higher final performance are critical. In addition, S2Q relies on a Softmax-based selection strategy with a temperature parameter controlling the exploration-exploitation balance. Nevertheless, we find that S2Q is not highly sensitive to this parameter, and a reasonable value can be chosen with minimal tuning effort, allowing S2Q to remain practical and easily deployable across a wide range of environments.

\section{Conclusion}
In this work, we addressed the fundamental challenge posed by monotonic value decomposition in MARL and introduced S2Q, a novel framework based on successive Q-learning. By sequentially learning multiple sub-value functions, S2Q continuously tracks high-value alternative actions and rapidly adapts when the optimal joint action shifts, leading to more effective exploration and faster convergence. This design enables efficient coordination without requiring explicit communication or access to global state information. Extensive experiments on challenging MARL benchmarks, including SMAC and GRF, demonstrate that S2Q consistently avoids suboptimal solutions induced by monotonicity constraints and achieves substantial improvements over state-of-the-art methods. These results position S2Q as a practical and general framework for advancing cooperative MARL.

\clearpage

\section*{Acknowledgment}
This work was supported partly by the Institute of Information \& Communications Technology Planning \& Evaluation (IITP) grant funded by the Korea government (MSIT) (No. RS-2022-II220469, Development of Core Technologies for Task-oriented Reinforcement Learning for Commercialization of Autonomous Drones), (No. RS-2025-25442824, AI Star-Fellowship Program (UNIST)), and (No. RS-2020-II201336, Artificial Intelligence Graduate School Support (UNIST)), and partly by the National Research Foundation of Korea (NRF) grant funded by the Korea government (MSIT) (No. RS-2025-23523191, LLM-Based Multi-Agent Reinforcement Learning for End-to-End Large Autonomous Swarm Control).

\section*{Ethics Statement}
This work focuses on algorithmic improvements in cooperative multi-agent reinforcement learning (MARL) under the CTDE paradigm. Our study uses publicly available simulation environments, including SMAC \citep{smac}, Google Research Football \citep{grf}, and SMACv2 \citep{smacv2}, which do not involve human subjects, personal data, or sensitive information. Our work is motivated by cooperative and coordination tasks, and does not explicitly encourage or evaluate harmful uses. We adhere to the ICLR Code of Ethics and affirm that all research was conducted in alignment with principles of fairness, transparency, and academic integrity.

\section*{Reproducibility Statement}
To ensure reproducibility, we provide a detailed description of S2Q in Section~\ref{sec:comm}, with implementation details in Appendix~\ref{sec:app_imple}. Benchmarks including SMAC \citep{smac}, GRF \citep{grf}, and SMACv2 \citep{smacv2} are publicly available, with descriptions and code links in Appendix~\ref{sec:app_env}. An anonymized code repository containing the full implementation is submitted as supplementary material where configurations are detailed in Appendix~\ref{app:detail}. Finally, considered baselines and their repositories are listed in Appendix~\ref{sec:app_exp}. Together, these resources collectively enable independent reproduction of our results reported in this paper.

\bibliography{iclr2026_conference}
\bibliographystyle{iclr2026_conference}

\clearpage

\appendix

\section{The Use of Large Language Models (LLMs)}
In preparing this paper, we made limited use of large language models (LLMs) exclusively for polishing grammar and improving the clarity of our writing. No part of the research process, including problem formulation, method design, experimentation, analysis, or interpretation, was conducted using LLMs. Their role was strictly restricted to editorial assistance, ensuring the presentation of our work is clear and readable.

\renewcommand{\theequation}{B.\arabic{equation}}
\setcounter{equation}{0}
\renewcommand{\thealgorithm}{B.\arabic{algorithm}}
\setcounter{algorithm}{0}

\section{Missing Proof}
\label{sec:app_proof}

\begin{theorem}[Successive Q-learning]
Let $Q^*(s_t,\boldsymbol{\tau}_t,\mathbf a_t)$ and $\{Q^{\mathrm{sub}}_k\}_{k=0}^K$ be the joint action-value function and sub-value functions obtained by minimizing~\eqref{eq:wqmix} and \eqref{eq:succesiveQ}, respectively, and let 
$\{\mathbf a_{0,t}^*,\dots,\mathbf a_{K,t}^*\}$ denote the $K{+}1$ successive suboptimal joint actions of $Q^*$ for given $s_t$ and $\boldsymbol{\tau}_t$ at timestep $t$ s.t.
\[
Q^*(s_t,\boldsymbol{\tau}_t,\mathbf a_{0,t}^*) 
\ge \dots \ge 
Q^*(s_t,\boldsymbol{\tau}_t,\mathbf a_{K,t}^*) 
\ge Q^*(s_t,\boldsymbol{\tau}_t,\mathbf a_t),
\quad 
\forall \mathbf a_t \notin \{\mathbf a_{0,t}^*,\dots,\mathbf a_{K,t}^*\}.
\]
If the reward function $r$ is bounded and suppression factor $\alpha$ is sufficiently large, then,
\begin{equation}
\mathbf a_{k,t}^* =
\arg\max_{\mathbf a_t} Q^{\mathrm{sub}}_k(s_t,\boldsymbol{\tau}_t,\mathbf a_t) \quad \forall k \in \{0,...,K\}.
\end{equation}
\end{theorem}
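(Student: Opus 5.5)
We argue by induction on $k$, treating each minimization at a fixed $(s_t,\boldsymbol{\tau}_t)$ as a weighted projection of a target vector over joint actions onto the monotonic (IGM) function class, as in the WQMIX analysis. Under the idealized assumptions of the statement, the minimizer of \eqref{eq:succesiveQ} is determined by an effective target over joint actions. For index $k$ this target is
\[
\tilde y_k(\mathbf a)=Q^*(s_t,\boldsymbol{\tau}_t,\mathbf a)-\alpha\,\mathbb I(\mathbf a\in\mathcal A_{k-1,t})\max\bigl(Q^*(s_t,\boldsymbol{\tau}_t,\mathbf a),C\bigr),
\]
where we identify $Q^*_{\mathrm{targ}}$ with $Q^*$ at convergence, so that the expected value of $y_t$ equals $Q^*$. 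Boundedness of $r$ gives $|Q^*|\le R_{\max}/(1-\gamma)=:B$. Hence every entry of the target is bounded, and the suppressed entries are pushed below $B-\alpha C$. Once $\alpha> 2B/C$, these entries lie strictly below every unsuppressed entry.

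The base case $k=0$ has $\mathcal A_{-1,t}=\emptyset$, so the objective is exactly the WQMIX projection of $Q^*$. Here we invoke the WQMIX property that the weighted projection $\Pi_w Q^*$ onto the monotonic class recovers the argmax of $Q^*$ when $w_c$ is small enough; this is Theorem 1 of Rashid et al.\ for OW-QMIX. The conclusion is $\arg\max Q_0^{\mathrm{sub}}=\mathbf a_{0,t}^*$.

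For the inductive step, assume $\arg\max Q_j^{\mathrm{sub}}=\mathbf a_{j,t}^*$ for all $j<k$, so that $\mathcal A_{k-1,t}=\{\mathbf a_{0,t}^*,\dots,\mathbf a_{k-1,t}^*\}$. Then the argmax of the effective target $\tilde y_k$ is $\mathbf a_{k,t}^*$, by the ordering assumption in the statement together with the suppression gap above. Applying the same weighted-projection argmax-recovery result to $\tilde y_k$ in place of $Q^*$ gives $\arg\max Q_k^{\mathrm{sub}}=\mathbf a_{k,t}^*$, which closes the induction.

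\textbf{Main obstacle.} The main obstacle is justifying that the weighted monotonic projection preserves the argmax of an \emph{arbitrary} bounded target, here $\tilde y_k$. The OW-QMIX guarantee is usually proved for a target whose argmax is a visited, greedy-consistent joint action, and it requires $w_c$ small relative to the value gaps. My plan is to redo that argument directly. Take a monotonic candidate, for instance one built from utilities with $Q_k^i(a^{i})$ large exactly at $a^i=a_{k,t}^{i*}$. Its loss can be driven to weight $1$ only on underestimated entries. If the projection's argmax were some $\mathbf a\neq\mathbf a_{k,t}^*$, we would exhibit a lower-loss monotonic function, contradicting optimality.

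The suppression helps in this step, because the suppressed actions are overestimated by any reasonable fit and so carry only weight $w_c$. Taking $\alpha$ large, and $w_c$ small as permitted by the implicit WQMIX weighting $w_k$, is exactly what makes this comparison go through. Ties in the ordering would be handled by interpreting argmax set-wise.
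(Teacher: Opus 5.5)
Your argument is sound in outline, but it takes a genuinely different route from the paper's. The paper never treats the fit as a projection onto the monotonic class. It asserts that at convergence $y_t=Q^*$ and that each $Q^{\mathrm{sub}}_k$ equals its suppressed target pointwise, i.e.\ $Q^{\mathrm{sub}}_k=\tilde y_k$ in your notation. The weights $w_k$ then play no role, and the proof reduces to your suppression-gap step: $\tilde y_k(\tilde{\mathbf a})\le Q^*(\mathbf a^*_{k,t})$ for $\tilde{\mathbf a}\in\mathcal A_{k-1,t}$ once $\alpha\ge\bigl(Q^*(\tilde{\mathbf a})-Q^*(\mathbf a^*_{k,t})\bigr)/\max(Q^*(\tilde{\mathbf a}),C)$. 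This threshold is made uniform over $(s_t,\boldsymbol\tau_t)$ by boundedness of $Q^*$, and your $\alpha>2B/C$ is an explicit admissible choice.

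The paper's route is short and needs no condition on $w_c$. However, it tacitly treats $Q^{\mathrm{sub}}_k$ as unrestricted, which sits uneasily with the IGM architecture whose failure on non-monotonic payoffs motivates the paper. It also silently identifies the greedy actions defining $\mathcal A_{k-1,t}$ with the top actions of $Q^*$; your induction makes that identification explicit.

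Your projection view is more faithful to the architecture, but it proves a variant with a hypothesis the statement does not grant: $w_c$ small enough for WQMIX-style argmax recovery. This is unavoidable in your model, since with $w_c=1$ the case $k=0$ is the plain QMIX projection, which can miss the argmax. You also inherit WQMIX's idealisations: full coverage of joint actions, a unique maximiser, and fitting each $(s_t,\boldsymbol\tau_t)$ independently even though utilities are shared across joint histories.

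You need not redo the recovery argument. The WQMIX results are stated for an arbitrary target table, so they apply to $\tilde y_k$ directly. Note, though, that the optimistic-weighting result is their Theorem~2 (Theorem~1 is the idealised central weighting). The paper's appendix defines $w_0$ as a central-type weighting (weight $1$ only where $Q^*$ reaches $\max_{\mathcal A_{K,t}}Q^*$) and $w_k$, $k\ge1$, as optimistic with respect to the suppressed target, so you would use both theorems. Because $w_0$ depends on $\mathcal A_{K,t}$, your base case is really a self-consistency condition at a joint fixed point rather than a stand-alone first step.
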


\begin{proof}
We consider the minimization of the joint value loss \eqref{eq:wqmix} together with the proposed sub-value losses \eqref{eq:succesiveQ}. At convergence, the Bellman optimality equation implies that $Q^*$ satisfies $y_t = r_t + \gamma Q^*_{\mathrm{targ}} = Q^*$. Using this relation, each sub-value function $Q^{\mathrm{sub}}_k$ satisfies
\[
Q^{\mathrm{sub}}_k(s_t,\boldsymbol{\tau}_t,\mathbf a_t)
=
\begin{cases}
Q^*(s_t,\boldsymbol{\tau}_t,\mathbf a_t)-\alpha\max(Q^*(s_t,\boldsymbol{\tau}_t,\mathbf a_t),C)
&\text{if }\mathbf a_t\in\mathcal A_{k-1,t},\\
Q^*(s_t,\boldsymbol{\tau}_t,\mathbf a_t)
&\text{otherwise,}
\end{cases}
\]
for every $(s_t,\boldsymbol{\tau}_t,\mathbf{a}_t)$ pairs.

For given $(s_t,\boldsymbol{\tau}_t)$ pair, if the suppression factor $\alpha$ satisfies
\begin{equation}
\alpha \ge \max_{\tilde{\mathbf a}_t \in \mathcal A_{k-1,t}}\frac{Q^*(s_t, \boldsymbol{\tau}_t, \tilde{\mathbf a}_t)
- Q^*(s_t, \boldsymbol{\tau}_t, \mathbf a_{k,t}^*)}
{\max(Q^*(s_t, \boldsymbol{\tau}_t, \tilde{\mathbf a}_t), C)}\geq 0,
\end{equation}
then $Q^{\mathrm{sub}}_k(s_t,\boldsymbol{\tau}_t,\tilde{\mathbf{a}}_t)=Q^*(s_t, \boldsymbol{\tau}_t, \tilde{\mathbf a}_t)
- \alpha \, \max(Q^*(s_t, \boldsymbol{\tau}_t, \tilde{\mathbf a}_t), C)
\le Q^*(s_t, \boldsymbol{\tau}_t, \mathbf a_{k,t}^*) = Q^{\mathrm{sub}}_k(s_t,\boldsymbol{\tau}_t,\mathbf{a}_{k,t}^*), \quad \forall \tilde{\mathbf a}_t \in \mathcal A_{k-1,t}$.

From the definition of $a_{k,t}^*$, $Q^{\mathrm{sub}}_k(s_t,\boldsymbol{\tau}_t,\tilde{\mathbf a}_t) =  Q^*_k(s_t,\boldsymbol{\tau}_t,\tilde{\mathbf a}_t) \leq Q^*(s_t,\boldsymbol{\tau}_t,\mathbf{a}_{k,t}^*)=Q^{\mathrm{sub}}_k(s_t,\boldsymbol{\tau}_t,\mathbf{a}_{k,t}^*),\forall \tilde{\mathbf a}_t \in \mathcal{A} \backslash \mathcal{A}_{k-1,t}$, so we can conclude \begin{equation}
\mathbf{a}_{k,t}^* = \arg\max_{\mathbf a_t} Q^{\mathrm{sub}}_k(s_t,\boldsymbol{\tau}_t,\mathbf a_t).
\label{eq:subvalproof}
\end{equation}

Thus, if we take the supremum over all state–observation histories and the maximum over all $k \in {0,\dots,K}$, the suppression factor must satisfy
\[\alpha \geq \sup_{s_t,\boldsymbol{\tau}_t} \max_{k\in\{0,\cdots,K\}}\max_{\tilde{\mathbf a}_t \in \mathcal A_{k-1,t}}
\frac{Q^*(s_t, \boldsymbol{\tau}_t, \tilde{\mathbf a}_t)
- Q^*(s_t, \boldsymbol{\tau}_t, \mathbf a_{k,t}^*)}
{\max(Q^*(s_t, \boldsymbol{\tau}_t, \tilde{\mathbf a}_t), C)}\geq 0\], and this supremum is finite because the bounded-reward assumption implies that $Q^*$ is also bounded, which ensures that \eqref{eq:subvalproof} holds for all $(s_t,\boldsymbol{\tau}_t)$ pairs and for all $k \in \{0,\dots,K\}$. This completes the proof of the theorem.

\end{proof}

\renewcommand{\theequation}{C.\arabic{equation}}
\setcounter{equation}{0}
\renewcommand{\thealgorithm}{C.\arabic{algorithm}}
\setcounter{algorithm}{0}

\section{Implementation Details}
\label{sec:app_imple}
In this section, we detail the implementation of the proposed S2Q method. First, we present the encoder–decoder architecture described in Section \ref{sec:comm} together with its associated loss function in Appendix \ref{sec:app_auto}. Then, we introduce the parameterization for the practical implementation of S2Q and the corresponding total loss function in Appendix \ref{sec:app_practical}. Furthermore, details on re-defined weighting function $w_k$, which enables S2Q to more effectively shift $Q^{\text{tot}}$ towards the optimal policy, are provided in Appendix \ref{sec:app_w}.

\subsection{Training the Encoder-Decoder Structure}
\label{sec:app_auto}
As introduced in Section \ref{sec:comm}, exact computation of $\mathbf{P}_t$ requires access to the global state $s$. Therefore, we proposed the approximation of $\mathbf{P}_t$ based on an encoder-decoder architecture. For practical implementation, we parameterize the encoder-decoder structure with neural network parameter $\psi$ and represent it as $E_\psi$ and $D_\psi$. The encoder $E_\psi$ utilizes an rnn layer, and maps the concatenation of all agents' local histories $\tau^i_t$ into a latent representation $z_t=E_\psi(\boldsymbol{\tau_t})$. Then, the decoder $D_\psi$ reconstructs both the global state and an approximate distribution $(\hat{s}_t,\hat{\mathbf{P}}_t)=D_\psi(z_t)$. 
Then, the encoder-decoder architecture is trained to approximate $\mathbf{P}^K$ based on a cross-entropy (CE) loss $\text{CE}(\mathbf{P}^K, \hat{\mathbf{P}}^K)$, while the reconstruction mean square error (MSE) loss $\text{MSE}(\mathbf{s}, \hat{\mathbf{s}})$ encourages $z$ to capture essential state information, thereby improving the prediction accuracy of $\hat{\mathbf{P}}^K$. The overall loss of the encoder-decoder architecture $\mathcal{L}_{\text{latent}}(\psi)$ is defined as: 
\begin{equation}
     \mathcal{L}_{\text{latent}}(\psi) = \text{CE}(\mathbf{P}^K, \hat{\mathbf{P}}^K) + \text{MSE}(\mathbf{s}, \hat{\mathbf{s}}).
    \label{eq:latent}
\end{equation}

\subsection{Practical Implementation of S2Q}
\label{sec:app_practical}
For practical implementation of S2Q, we parameterize both the monotonic sub-value functions $Q_0^{\text{sub}},\dots,Q_K^{\text{sub}}$ and the unrestricted value function $Q^*$ using neural network parameter $\theta$. As described in Section~\ref{sec:background_wqmix}, WQMIX \citep{rashid2020weighted} trains the unrestricted value function $Q^*$ via temporal-difference (TD) loss in equation~\ref{eq:wqmix}, which we rewrite for the parameteriszed $Q$-function as:
\begin{equation}
\label{eq:wqmix_appendix}
\mathcal{L}_{Q^*}(\theta) = \mathbb{E}_{(s_t, \boldsymbol{\tau}_t, \mathbf{a}_t) \sim \mathcal{B}}
\left[(Q^*_\theta(s_t, \boldsymbol{\tau}_t, \mathbf{a}_t) - y_t)^2\right],
\quad y_t = r_t + \gamma Q_{\bar{\theta}}^*(s_{t+1},\boldsymbol{\tau}_{t+1}, \mathbf{a}'_{t+1}),
\end{equation}
where $\boldsymbol{\tau}_t = (\tau_t^1, \ldots, \tau_t^N)$ is the joint history, $Q_{\bar{\theta}}^*$ the target network, and $a_t^{i\prime} = \arg\max Q^i(\tau_t^i, \cdot)$ the individual target action. 
Then, WQMIX further updates $Q^{\text{tot}}$ toward the target $y_t$ by adaptively weighting the TD error using $w(s_t,\mathbf{a}_t)$, defined as $w(s_t,\mathbf{a}_t)=1$ if $Q^{\text{tot}}(s_t,\boldsymbol{\tau}_t,\mathbf{a}_t) < y_t$ and $w(s_t,\mathbf{a}_t)=w_c<1$ otherwise, thereby prioritizing updates on actions that are underestimated.

Following this design, we train $Q^*_\theta$, and then redefine the mean squared errors of the sub-value functions in equation \ref{eq:succesiveQ} as the successive loss $\mathcal{L}_{\text{successive}}(\theta)$, expressed as:
\begin{equation}
\label{eq:successive_appendix}
\begin{split}
\mathcal{L}_{\text{successive}}(\theta) = \sum_{k=0}^{K} 
\mathbb{E}_{(s_t, \boldsymbol{\tau}_t, \mathbf{a}_t) \sim \mathcal{B}} \Bigg[
& w_k(s_t,\mathbf{a}_t) \Big(
Q_{\theta,k}^{\text{sub}}(s_t, \boldsymbol{\tau}_t, \mathbf{a}_t) \\
& - \big(y_t - \alpha \, \mathbb{I}(\mathbf{a}_t \in \mathcal{A}_{k-1,t}) 
\cdot \max (Q_{\bar{\theta}}^*(s_t, \boldsymbol{\tau}_t, \mathbf{a}_t),C) \big)
\Big)^2
\Bigg],
\end{split}
\end{equation}
where $w_k$ is a weighting function for each $k$ inspired by WQMIX, which we detail in Appendix~\ref{sec:app_w}. $\mathbb{I}$ is the indicator function, $\mathcal{B}$ is the replay buffer, and $\mathcal{A}_{k,t} = \{\mathbf{a}_{0,t}^*, \mathbf{a}_{1,t}^*, \dots, \mathbf{a}_{k,t}^*\}$ with $\mathcal{A}_{-1,t}=\emptyset$ denotes the set of previously identified suboptimal actions. The factor $\alpha$ controls how strongly the values of actions in $\mathcal{A}_{k-1,t}$ are suppressed. Combined with the encoder-decoder loss $L_{latent}(\psi)$ introduced in Appendix~\ref{sec:app_auto}, the final loss function of S2Q is derived as:
\begin{equation}
    \mathcal{L}_{\text{S2Q}}(\theta,\psi) = \mathcal{L}_{\text{successive}}(\theta) + \mathcal{L}_{Q^*}(\theta) + \mathcal{L}_{\text{latent}}(\psi)
\end{equation}

To further stabilize optimal action selection, we adopt a simple sampling strategy that balances stability and exploration. Specifically, with probability $p=0.5$, the sub-network index is fixed to $k=0$ for the entire episode, ensuring consistent learning of the global optimum. Otherwise, $k$ is sampled at each timestep from $\hat{\mathbf{P}}_t$, to promote exploration of suboptimal actions. This simple yet effective mechanism provides both reliable convergence toward the optimum and sufficient exploration diversity. Algorithm~\ref{alg:s2q_extended} summarizes the full S2Q learning process.

\begin{algorithm}[H]
\caption{Successive Sub-value Q-learning (S2Q) Framework}
\begin{algorithmic}[1]
\STATE Initialize unrestricted value network $Q^*_\theta$ and its target network $Q^*_{\bar{\theta}}$, sub-value networks $[Q^{\text{sub}}_{\theta,k}]_{k=0}^K$, encoder-decoder $(E_\psi,D_\psi)$, and replay buffer $\mathcal{B}$
\FOR{each training iteration}
    \STATE With probability $p=0.5$, fix sub-value index $k=0$ for the entire episode
    \FOR{each environment step $t$}
        \IF{training}
            \STATE Encode trajectories $\boldsymbol{\tau}_t$ into latent variable $z_t = E_\psi(\boldsymbol{\tau}_t)$
            \STATE Decode $z_t$ to obtain categorical distribution and reconstructed state: $\hat{\mathbf{P}}_t,\hat{s}_t = D_\psi(z_t)$
            \IF{episode is fixed to $k=0$}
                \STATE Set $k=0$
            \ELSE
                \STATE Sample $k \sim \hat{\mathbf{P}}_t$
            \ENDIF
            \STATE Select joint action $\mathbf{a}_t \sim \epsilon$-greedy$(Q^{\text{sub}}_k)$
        \ELSE
            \STATE Select joint action $\mathbf{a}_t \sim \text{greedy}(Q^{\text{sub}}_0)$
        \ENDIF
        \STATE Execute joint action $\mathbf{a}_t$, observe reward $r_t$ and next state $s_{t+1}$
        \STATE Store transition $(s_t, \mathbf{a}_t, r_t, s_{t+1})$ in replay buffer $\mathcal{D}$
    \ENDFOR
    \STATE Compute TD target $y=r + \gamma Q^*_{\bar{\theta}}(s,\boldsymbol{\tau}, \mathbf{a})$ 
    \STATE Update $[Q^{\text{sub}}_{\theta,k}]_{k=0}^K$ and $Q^*_\theta$ via $\mathcal{L}_{\text{successive}}$ and $\mathcal{L}_{Q^*}$ in equations \ref{eq:successive_appendix}, \ref{eq:wqmix_appendix}
    \STATE Update encoder and decoder $(E,D)$ via $\mathcal{L}_{\text{latent}}$ in equation \ref{eq:latent}
\ENDFOR
\STATE Periodically update target network $Q^*_{\bar{\theta}}$
\end{algorithmic}
\label{alg:s2q_extended}
\end{algorithm}







\subsection{Weighting Function in Weighted TD Learning}
\label{sec:app_w}

As explained in Section~\ref{sec:background_wqmix}, WQMIX updates $Q^{\text{tot}}$ toward the target $y_t$ by adaptively weighting the TD error using $w(s_t,\mathbf{a}_t)$. Building on WQMIX, S2Q introduces weighting function $w_k$, defined as:
\begin{equation}
    w_k(s_t,\mathbf{a}_t) =
\begin{cases}
1, & \text{if } Q^*(s_t,\boldsymbol{\tau}_t,\mathbf{a}_t) \geq \max_{\mathbf{a}_t^* \in \mathcal{A}_K,t} Q^*(s_t,\boldsymbol{\tau}_t,\mathbf{a}^*_t), \quad k=0\\[1mm]
1, & \text{if } Q^{\text{sub}}_k(s_t,\boldsymbol{\tau}_t,\mathbf{a}_t) < y_t - \alpha  \mathbb{I}(\mathbf{a}_t \in \mathcal{A}_{k-1,t}) \cdot Q_{\text{targ}}^*(s_t, \boldsymbol{\tau}_t, \mathbf{a}_t) \quad k=1,\dots,K\\[1mm]
w_c, & \text{otherwise}.
\end{cases}
\end{equation}

Our design of $w_k$ is motivated by the following rationale. For $k=0$, the weighting rule ensures that optimality is consistently propagated into $Q_0^{\text{sub}} := Q^{\text{tot}}$, allowing S2Q to rely solely on $Q_0^{\text{sub}}$ during evaluation without requiring communication. For $k \geq 1$, the rule closely resembles the WQMIX weighting scheme, but instead of directly comparing against the TD target $y_t$, it suppresses the values of previously identified suboptimal actions before applying the comparison, thereby enabling the successive extraction of alternative high-value actions. In all other cases, the factor $w_c$ acts as a down-weighting term to moderate updates. A sensitivity analysis of $w_c$ is provided in Appendix~\ref{sec:app_ablation}.

\clearpage

\renewcommand{\theequation}{D.\arabic{equation}}
\setcounter{equation}{0}
\renewcommand{\thefigure}{D.\arabic{figure}}
\setcounter{figure}{0}
\renewcommand{\thetable}{D.\arabic{table}}
\setcounter{table}{0}
\section{Softmax-based Exploration Behavior of S2Q}
\label{sec:app_exploration}

\begin{figure}[h!]
    \centering
    \includegraphics[width=\linewidth]{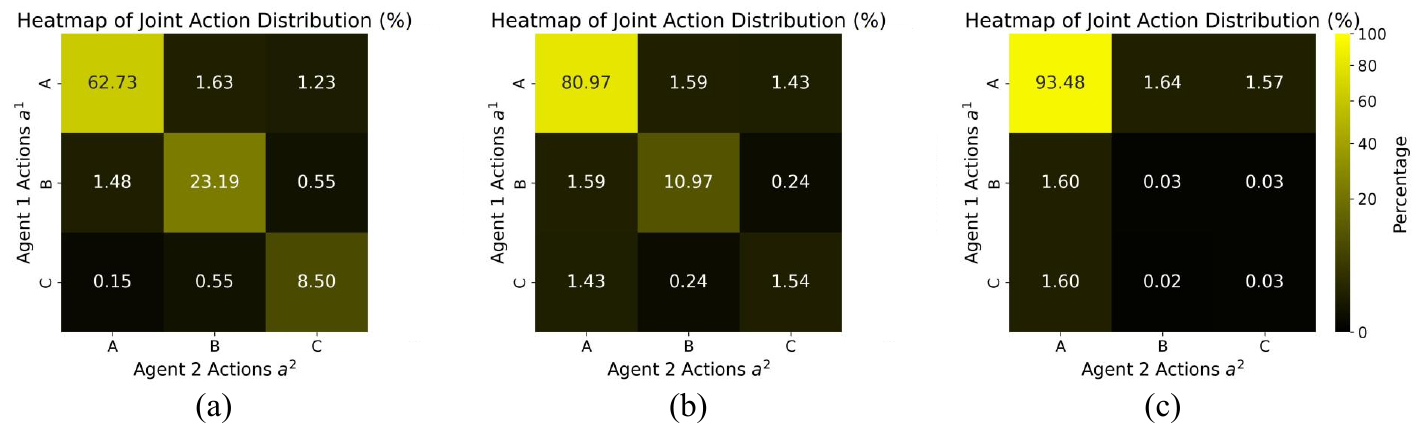}
\caption{Heatmap of joint action distribution under three settings: (a) S2Q ($K=2,T=1.0)$, (b) S2Q ($K=2, T=0.5)$, (c) per-agent $\epsilon$-greedy}
\label{fig:exploration}
\end{figure}

In this section, we analyze the exploration behavior of S2Q compared to conventional per-agent $\epsilon$-greedy strategies. To visualize the exploration behavior of each method, we revisit the 2-agent, 3-action matrix game introduced in Section~\ref{sec:motivation}, where the optimal joint action is $(A,A)$ with value 8, and the suboptimal actions $(B,B)$ and $(C,C)$ have values 7 and 6, respectively. Fig.~\ref{fig:exploration} illustrates the joint action probabilities under three different settings: (a) S2Q with $K=2$ and $T=1$, (b) S2Q with $K=2$ and $T=0.5$, and (c) conventional per-agent $\epsilon$-greedy exploration. 

Figure~\ref{fig:exploration} demonstrates that S2Q successfully identifies the optimal action while still executing suboptimal actions with meaningful frequency, even at a modest temperature of $T=0.5$. At a higher temperature of $T=1$, the optimal action remains the most frequently executed, while exploration across suboptimal actions is maximized. In contrast, conventional per-agent $\epsilon$-greedy exploration concentrates almost exclusively on the optimal action $(A,A)$. This highlights a key limitation of conventional methods that rely on a single optimal action and independent $\epsilon$-greedy sampling: they struggle to adapt when the value function changes dynamically. By contrast, S2Q adapts smoothly to changes in the optimal action, based on its prioritized Softmax-based exploration over tracked suboptimal actions. This broader exploration provides richer training signals for $Q^*$, facilitating faster convergence toward globally optimal solutions.

\clearpage

\renewcommand{\theequation}{E.\arabic{equation}}
\setcounter{equation}{0}
\renewcommand{\thefigure}{E.\arabic{figure}}
\setcounter{figure}{0}
\renewcommand{\thetable}{E.\arabic{table}}
\setcounter{table}{0}
\section{Environment Details}
\label{sec:app_env}

In this section, we provide a detailed description of the environments and scenarios used for evaluating the proposed S2Q. We consider two main benchmarks: the StarCraft Multi-Agent Challenge (SMAC) \citep{smac} and the Google Research Football (GRF) environment \citep{grf}. As mentioned in Section \ref{sec:experiments}, we further categorize SMAC scenarios based on their complexity and communication requirements. Specifically, we distinguish between SMAC-Hard+, which presents challenging coordination tasks with a larger number of units and complex combat strategies, and SMAC-Comm, which emphasizes tasks requiring explicit inter-agent communication for effective execution. Appendices~\ref{sec:app_smac_hard}, \ref{sec:app_grf}, \ref{sec:app_smac_comm} provide detailed descriptions of the SMAC-Hard+, GRF, and SMAC-Comm scenarios discussed in Section~\ref{sec:experiments}. In Appendices~\ref{sec:app_smac_v2} and \ref{sec:app_smac_mo}, we detail SMACv2 \citep{smacv2} and SMAC-Hard (Mixture Opponent) \citep{deng2024smac}, where we conduct additional experiments to further demonstrate the effectiveness of S2Q. The performance results for SMACv2 and SMAC-Hard (Mixture Opponent) are provided in Appendices~\ref{sec:app_smacv2} and \ref{sec:app_perf_smacmo}, respectively.

\subsection{The StarCraft Multi-Agent Challenge (SMAC)-Hard+}
\label{sec:app_smac_hard}

\begin{figure}[h!]
    \centering
    \begin{subfigure}[b]{0.32\textwidth}
        \centering
        \includegraphics[width=\textwidth]{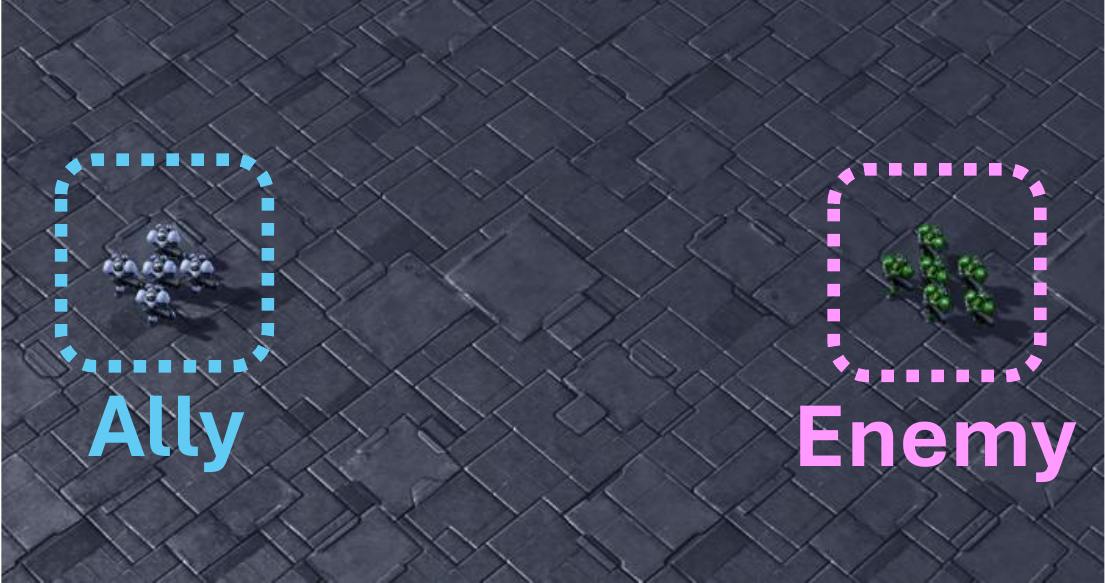}
        \caption{5m\_vs\_6m}
    \end{subfigure}
    \hfill
    \begin{subfigure}[b]{0.32\textwidth}
        \centering
        \includegraphics[width=\textwidth]{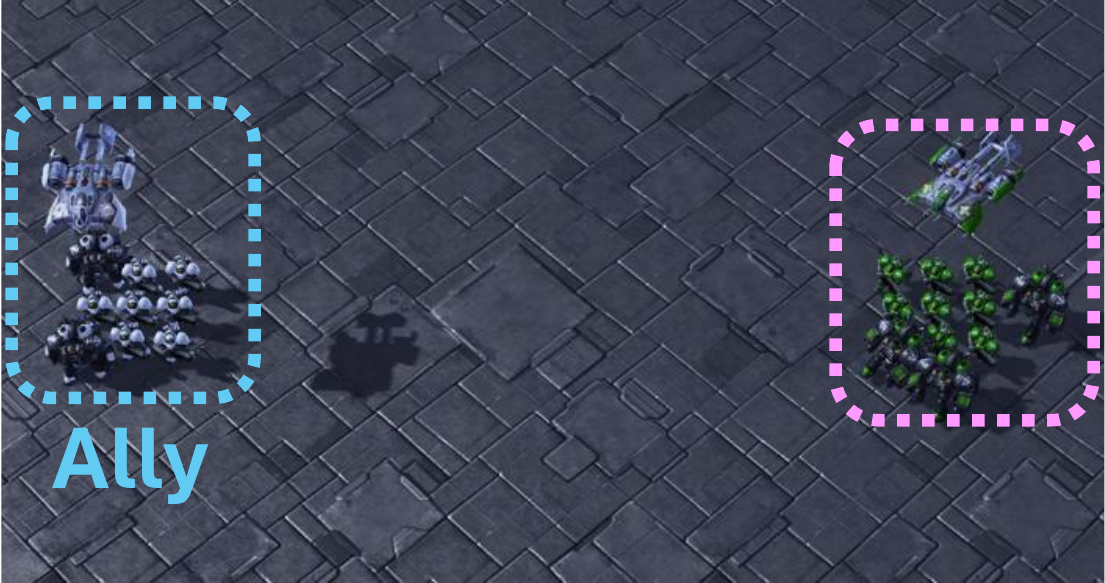}
        \caption{MMM2}
    \end{subfigure}
    \hfill
    \begin{subfigure}[b]{0.32\textwidth}
        \centering
        \includegraphics[width=\textwidth]{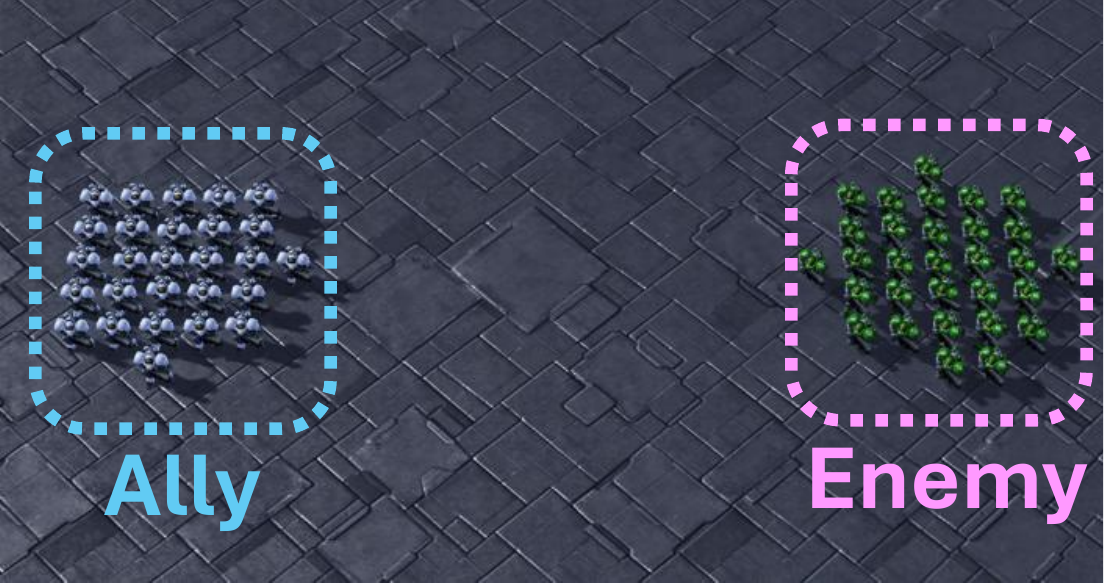}
        \caption{27m\_vs\_30m}
    \end{subfigure}
    \hfill
        \begin{subfigure}[b]{0.32\textwidth}
        \centering
        \includegraphics[width=\textwidth]{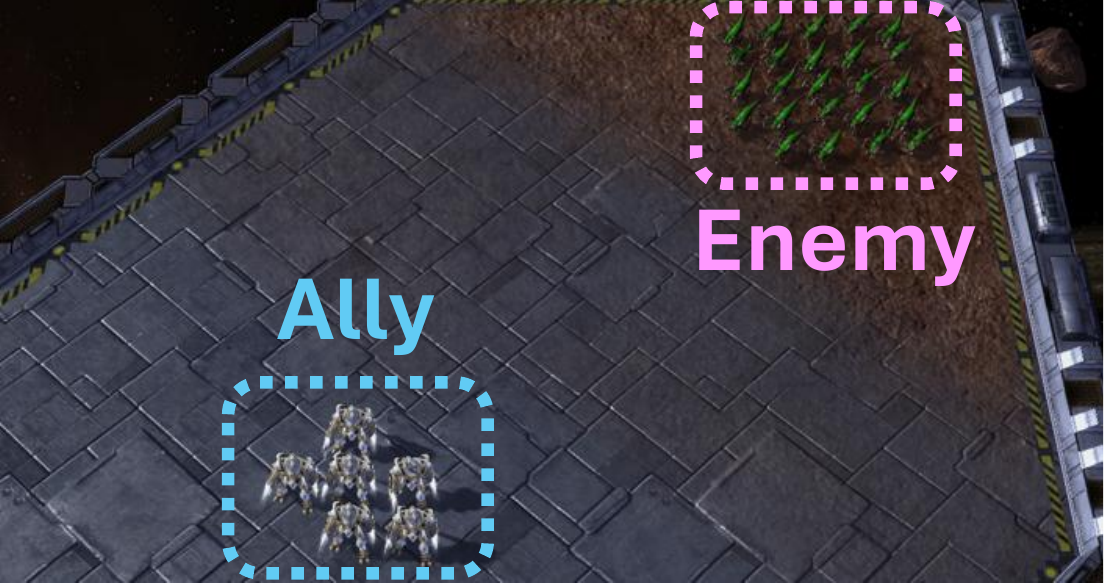}
        \caption{Corridor}
    \end{subfigure}
    \hfill
        \begin{subfigure}[b]{0.32\textwidth}
        \centering
        \includegraphics[width=\textwidth]{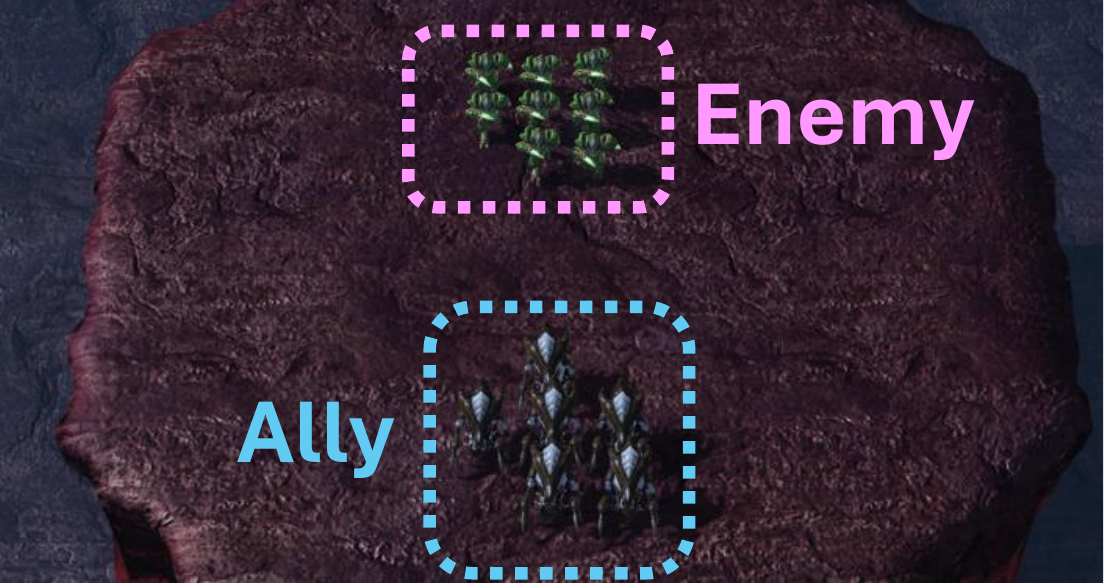}
        \caption{6h\_vs\_8z}
    \end{subfigure}
    \hfill
        \begin{subfigure}[b]{0.32\textwidth}
        \centering
        \includegraphics[width=\textwidth]{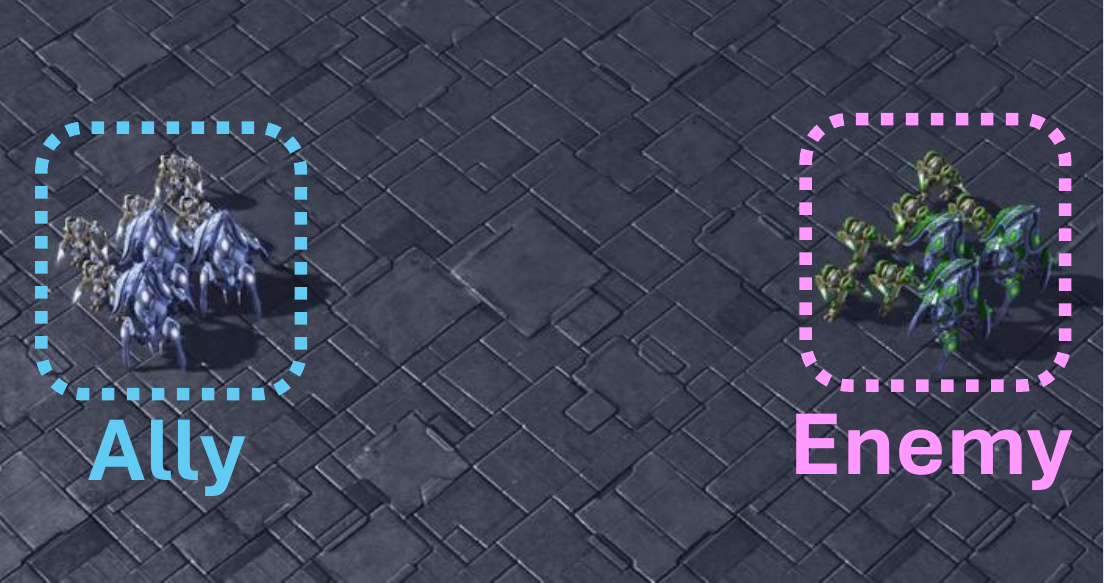}
        \caption{3s5z\_vs\_3s6z}
    \end{subfigure}
    \hfill

    \caption{Visualizations of SMAC-Hard+ scenarios}
    \label{fig:smac_hard_image}
\end{figure}

SMAC \citep{smac} is a widely used benchmark designed to test cooperative Multi-Agent Reinforcement Learning (MARL) algorithms in complex, decentralized settings. Built on top of the StarCraft II game engine, SMAC presents a series of tactical combat scenarios, where a team of AI-controlled allies faces off against enemies run by a built-in script. Each scenario differs in terms of terrain, unit types, and strategic difficulty, requiring agents to master sophisticated combat tactics such as focus fire, kiting, and exploiting environmental features. A match concludes either when one side is eliminated or the time limit expires. We provide visualizations of SMAC-Hard+ scenarios in Fig. \ref{fig:smac_hard_image}, and summarize the details of each considered episodes in Table \ref{tb:smac_scenarios}, followed by a detailed description of the SMAC benchmark.

\textbf{State space}: Global state $s$ aggregates detailed information from all entities on the battlefield. For allied units, this includes their positions, health, cooldowns, shields if applicable, and unit types. Enemy data is similar, except it omits the cooldown stats. In addition, the most recent action taken by each agent is recorded as a one-hot encoded vector.

\textbf{Observation space}: In SMAC, agents' observations are restricted to allies and enemies within a sight range of 9 units. Specifically, an agent's observation vector is composed of four distinct segments. First, movement capabilities are encoded across four directions (up, down, left, right). Second, data about visible enemies includes their relative positions, distances, health, shield values, unit type, and whether they can currently be targeted. Third, ally-related information mirrors the enemy format but excludes the agent itself. Finally, self-features reflect the observing agent’s own condition:its current health, shield level (if any), and unit classification.

\textbf{Action space}: Agents in SMAC operate using a set of discrete actions. These include movement in the four primary directions (north, south, east, west), the ability to attack enemies within a range of 6 units, and special abilities limited to certain units such as Medivacs. Agents can also issue a stop command or perform a no-op action, though the latter is reserved for units that have been eliminated.

\textbf{Reward function}: SMAC incorporates a shaped reward function composed of three main elements: damage inflicted on enemy units, elimination of those units, and overall victory in the scenario. The reward is formally defined as:

\begin{equation}
    R = \sum_{e \in \text{enemies}} \Delta \text{Health}(e) + \sum_{e \in \text{enemies}} \mathbb{I}(\text{Health}(e) = 0) \cdot \text{Reward}_{\text{death}} + \mathbb{I}(\text{win}) \cdot \text{Reward}_{\text{win}}
\end{equation}

Here, $\Delta \text{Health}(e)$ represents the decrease in health of enemy unit $e$ during a given timestep, and $\mathbb{I}(\cdot)$ is an indicator function. $\text{Reward}_{\text{death}}$ and $\text{Reward}_{\text{win}}$ are set to 10 and 200, respectively.

\begin{table}[ht]
    \caption{Detailed descriptions of SMAC-Hard+ scenarios}
    \centering
    \begin{adjustbox}{max width=\textwidth}
    \begin{tabular}{llllll}
    \toprule
    \textbf{Map} & \textbf{Ally Units} & \textbf{Enemy Units} & \textbf{State Dimension} & \textbf{Obs Dimension} & \textbf{Num. of Actions} \\
    \midrule
    5m\_vs\_6m           & 5 Marines                         & 6 Marines   & 98   & 55  & 12  \\
    \midrule
    & 1 Medivac, & 1Medivac,\\
    MMM2    & 2 Marauders,                          & 3 Marauders,   & 322   & 176  & 18  \\
    & 7 Marines & 8 Marines\\
    \midrule
    27m\_vs\_30m          & 27 Marines                        & 30 Marines  & 1170  & 285  & 36 \\

    \midrule
        Corridor           & 6 Zealots                         & 24 Zerglings & 282  & 156  & 30 \\
    \midrule
    3s5z\_vs\_3s6z        & 3 Stalkers,                       & 3 Stalkers,  & 230  & 136  & 15 \\
                          & 5 Zealots                        & 6 Zealots \\
    \midrule
    6h\_vs\_8z        & 6 Hydralisks                       & 8 Zealots   & 140  & 78 & 14 \\

    \bottomrule
    \end{tabular}
    \end{adjustbox}
    \label{tb:smac_scenarios}
\end{table}

\subsection{Google Research Football (GRF)}
\label{sec:app_grf}

\begin{figure}[h!]
    \centering
    \begin{subfigure}[b]{0.49\textwidth}
        \centering
        \includegraphics[width=0.7\textwidth]{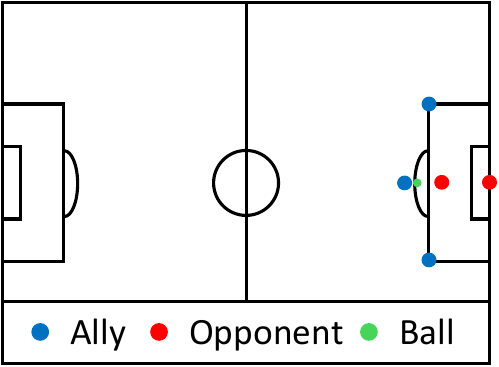}
        \caption{academy\_3\_vs\_2}
    \end{subfigure}
    \hfill
    \begin{subfigure}[b]{0.49\textwidth}
        \centering
        \includegraphics[width=0.7\textwidth]{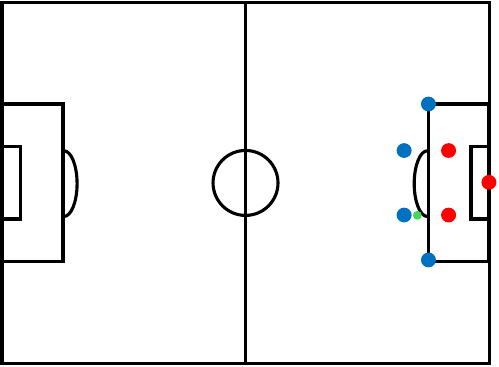}
        \caption{academy\_4\_vs\_3}
    \end{subfigure}
    \caption{Visualizations of GRF scenarios}
    \label{fig:grf_image}
\end{figure}
GRF \cite{grf} offers a multi-agent soccer environment where each player is controlled by an autonomous agent. The game models realistic ball physics, player motion, and interaction mechanics such as tackling and passing. Teams must coordinate to achieve scoring opportunities while competing against opponents driven by scripted behaviors. From GRF scenarios, we consider \texttt{academy\_3\_vs\_1\_with\_keeper}  and \texttt{academy\_4\_vs\_2\_with\_keeper} scenarios, which we abbreviate as \texttt{academy\_3\_vs\_2} and \texttt{academy\_4\_vs\_3} for brevity. Fig. \ref{fig:grf_image} illustrates the initial positions of the entities on the field, while Table \ref{tab:grf-conf} summarizes the considered scenarios.

\textbf{State space}: The global state $s$ consists of all player positions and velocities, as well as ball position and velocity. The data for both ally and opposing teams are set to same.

\textbf{Observation space}: The observation for an agent includes local information about the ego player, nearby teammates, opponents, and ball-related features, all expressed relative to the agent’s current frame. 

\textbf{Action space}: The discrete action space of GRF covers movement in eight directions, sliding, passing, shooting, sprinting, and standing still, all of which are necessary in order to achieve scoring opportunities. 

\textbf{Reward function} GRF provides two primary reward settings: Scoring and Checkpoint. Scoring function rewards agents with a +1 reward for scoring a goal and a -1 penalty for conceding one. While the Checkpoint function provides additional intermideate rewards. For example agents may receive rewards for successful passes or defensive actions. In our experiments, we follow the more sparse Scoring function, for more challenging scenarios.

\begin{table}[!h]
\caption{Detailed description of GRF scenarios}
\centering
\begin{adjustbox}{max width=\textwidth}
\begin{tabular}{@{}llllll@{}}
\toprule
\textbf{Scenario} & \textbf{Ally} & \textbf{Opponent} & \textbf{State Dim} & \textbf{Obs Dim} & \textbf{Action Dim} \\ \midrule
\multirow{2}{*}{\texttt{academy\_3\_vs\_2}} & \multirow{2}{*}{3 central midfield} & \multirow{2}{*}{\shortstack[l]{1 goalkeeper,\\1 center back}} & \multirow{2}{*}{26} & \multirow{2}{*}{26} & \multirow{2}{*}{19} \\
& & & & & \\ 
\midrule
\multirow{2}{*}{\texttt{academy\_4\_vs\_3}} & \multirow{2}{*}{4 central midfield} & \multirow{2}{*}{\shortstack[l]{1 goalkeeper,\\2 center back}} & \multirow{2}{*}{34} & \multirow{2}{*}{34} & \multirow{2}{*}{19} \\
& & & & & \\ 
\bottomrule
\\[0.3em]
\end{tabular}
\end{adjustbox}
\label{tab:grf-conf}
\end{table}

\subsection{SMAC-Comm}
\label{sec:app_smac_comm}

\begin{figure}[h!]
    \centering
    \begin{subfigure}[b]{0.24\textwidth}
        \centering
        \includegraphics[width=\textwidth]{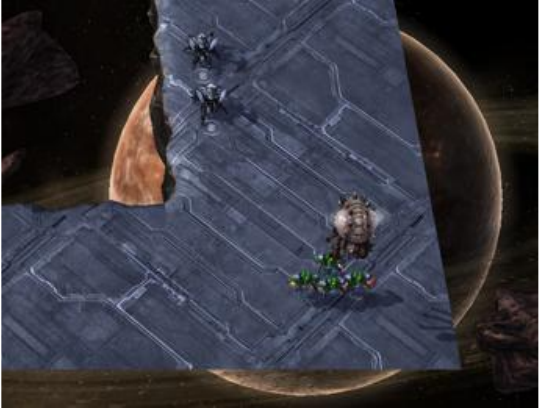}
        \caption{1o\_2r\_vs\_4r}
    \end{subfigure}
    \hfill
    \begin{subfigure}[b]{0.24\textwidth}
        \centering
        \includegraphics[width=\textwidth]{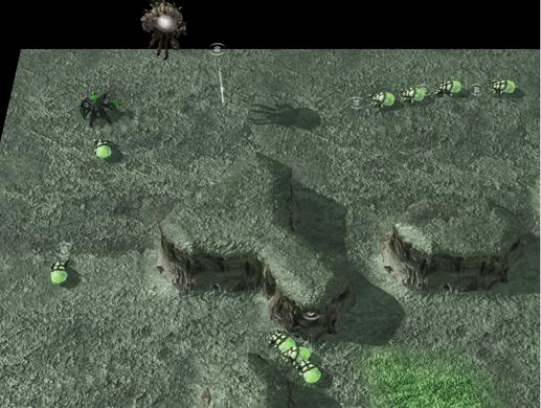}
        \caption{1o\_10b\_vs\_1r}
    \end{subfigure}
    \hfill
    \begin{subfigure}[b]{0.24\textwidth}
        \centering
        \includegraphics[width=\textwidth]{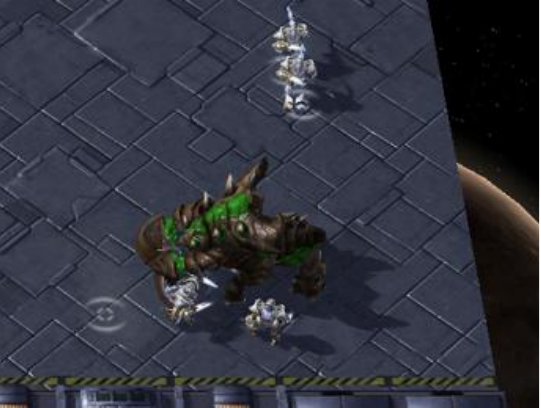}
        \caption{5z\_vs\_1ul}
    \end{subfigure}
    \hfill
        \begin{subfigure}[b]{0.24\textwidth}
        \centering
        \includegraphics[width=\textwidth]{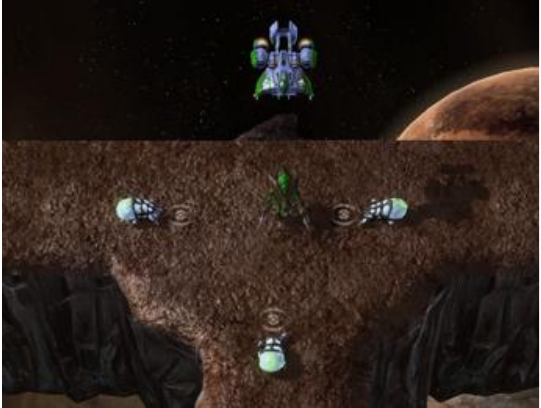}
        \caption{bane\_vs\_hM}
    \end{subfigure}

    \caption{Visualizations of SMAC-Comm scenarios}
    \label{fig:smac_comm_image}
\end{figure}

SMAC-Comm shares the same state, observation, action space, and reward function as the previously introduced SMAC-Hard+, but its tasks are specifically designed to emphasize communication. We illustrate the scenarios from SMAC-Comm considered in our experiments in Fig. \ref{fig:smac_comm_image}, and summarize them as Table \ref{tb:smac_comm_scenarios}.
\begin{table}[ht]
    \caption{Detailed description of SMAC-Comm scenarios}
    \centering
    \begin{adjustbox}{max width=\textwidth}
    \begin{tabular}{llllll}
    \toprule
    \textbf{Map} & \textbf{Ally Units} & \textbf{Enemy Units} & \textbf{State Dimension} & \textbf{Obs Dimension} & \textbf{Num. of Actions} \\
    \midrule
    1o\_2r\_vs\_4r           & 1 Overseer,                         & 4 Reapers   & 68   & 49  & 10  \\
    & 2 Roaches\\
    \midrule
    1o\_10b\_vs\_1r    & 1 Overseer,                          & 1 Roach   & 148   & 84  & 7  \\
    & 10 Banelings \\
    \midrule
    5z\_vs\_1ul          & 5 Zealots                        & 1 Ultralisk  & 63  & 35  & 7 \\
    \midrule
    bane\_vs\_hM           & 3 Banelings                         & 1 Hydralisk, & 52  & 35  & 8 \\
    & & 1 Medivac\\
    \bottomrule
    \end{tabular}
    \end{adjustbox}

    \label{tb:smac_comm_scenarios}
\end{table}

\clearpage

\subsection{SMACv2}
\label{sec:app_smac_v2}

\begin{figure}[h!]
    \centering
    \begin{subfigure}[b]{0.3\textwidth}
        \centering
        \includegraphics[width=\textwidth]{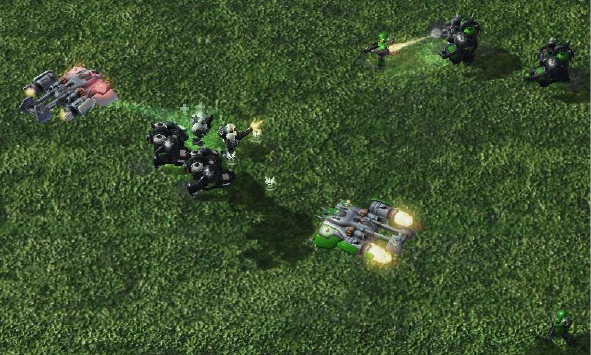}
        \caption{terran\_5\_vs\_5}
        \label{fig:smacv2_terran}
    \end{subfigure}
    \hfill
    \begin{subfigure}[b]{0.3\textwidth}
        \centering
        \includegraphics[width=\textwidth]{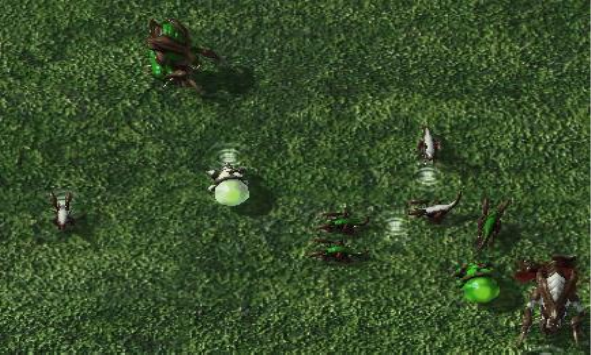}
        \caption{zerg\_5\_vs\_5}
        \label{fig:smacv2_zerg}
    \end{subfigure}
    \hfill
    \begin{subfigure}[b]{0.3\textwidth}
        \centering
        \includegraphics[width=\textwidth]{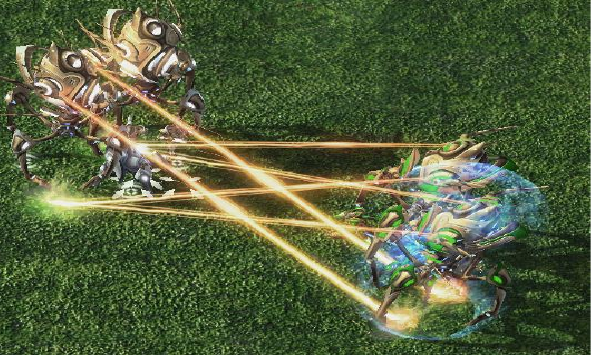}
        \caption{protoss\_5\_vs\_5}
        \label{fig:smacv2_toss}
    \end{subfigure}
    
    \caption{Visualizations of SMACv2 scenarios}
    \label{fig:smacv2_scenarios}
\end{figure}

SMACv2 \citep{smacv2} extends the original SMAC \citep{smac} benchmark to provide a more rigorous testbed for evaluating generalizable cooperative multi-agent learning. While the state, observation, and action spaces, as well as the reward function, remain similar to SMAC, several key modifications distinguish SMACv2. First, instead of fixed map configurations, SMACv2 introduces randomized initializations (e.g., unit positions, health, numbers, and attributes), which prevent agents from overfitting to static scenarios. Second, it emphasizes greater unit diversity and asymmetric matchups, requiring more sophisticated tactical coordination among agents. Third, the difficulty scaling mechanism is refined: rather than varying only the number or placement of enemy units, environmental factors are also randomized, creating a natural train–test distribution gap. In our experiments, we consider the \texttt{terran\_5\_vs\_5}, \texttt{zerg\_5\_vs\_5}, \texttt{protoss\_5\_vs\_5} scenarios. Visualizations for each scenario are provided in Fig.~\ref{fig:smacv2_scenarios}, and the scenario-specific details are summarized in Table \ref{tb:smacv2_scenarios}.

\begin{table}[ht]
    \caption{Detailed description of SMACv2 scenarios. Probabilities indicate the sampling distribution for randomized compositions.}
    \centering
    \begin{adjustbox}{max width=\textwidth}
    \begin{tabular}{llllll}
    \toprule
    \textbf{Map} & \textbf{Ally Units (prob.)} & \textbf{Enemy Units (prob.)} & \textbf{State Dim} & \textbf{Obs Dim} & \textbf{Num. Actions} \\
    \midrule
    \multirow{3}{*}{terran\_5\_vs\_5} 
        & Marine (0.45) & Marine (0.45) & \multirow{3}{*}{120} & \multirow{3}{*}{82} & \multirow{3}{*}{11} \\
        & Marauder (0.45) & Marauder (0.45) & & & \\
        & Medivac (0.1) & Medivac (0.1) & & & \\
    \midrule

    \multirow{3}{*}{zerg\_5\_vs\_5} 
        & Zergling (0.45) & Zergling (0.45) & \multirow{3}{*}{120} & \multirow{3}{*}{82} & \multirow{3}{*}{11} \\
        & Hydralisk (0.45) & Hydralist (0.45) & & & \\
        & Baneling (0.1) & Baneling(0.1) & & & \\
            \midrule
    \multirow{3}{*}{protoss\_5\_vs\_5} 
    & Zealot (0.45) & Zealot (0.45) & \multirow{3}{*}{130} & \multirow{3}{*}{92} & \multirow{3}{*}{11} \\
    & Stalker (0.45) & Stalker (0.45) & & & \\
    & Colossus (0.1) & Colossus (0.1) & & & \\
    \bottomrule
    \end{tabular}
    \end{adjustbox}
    \label{tb:smacv2_scenarios}
\end{table}

\subsection{SMAC-Hard (Mixture Opponent)}
\label{sec:app_smac_mo}

SMAC-Hard (Mixture Opponent) \citep{deng2024smac} shares the same state, observation, action space, and reward function as the previously introduced SMAC-Hard+, but it incorporates additional stochasticity by providing multiple opponent strategies. To distinguish it from the tasks from original SMAC-Hard+, we attach \texttt{\_Hard} to each scenario to denote tasks from SMAC-Hard (Mixture Opponent). We summarize the scenarios we consider in our experiments as Table \ref{tb:smac_comm_scenarios}.
\begin{table}[ht]
    \caption{{Detailed description of SMAC-Hard (Mixture Opponent) scenarios}}
    \centering

    \begingroup
    \begin{adjustbox}{max width=\textwidth}
    \begin{tabular}{llllll}
    \toprule
    \textbf{Map} & \textbf{Ally Units} & \textbf{Enemy Units} &
    \textbf{State Dimension} & \textbf{Obs Dimension} & \textbf{Num. of Actions} \\
    \midrule
    2s3z\_Hard           & 2 Stalkers,                         & 2 Stalkers   & 120   & 80  & 11  \\
                            & 3 Zealots & 3 Zealots \\
    \midrule
    3s\_vs\_5z\_Hard          & 3 Stalkers,                         & 5 Zealots     & 68  & 48  & 11   \\
    \midrule
    5m\_vs\_6m\_Hard           & 5 Marines                         & 6 Marines   & 98   & 55  & 12   \\
    \bottomrule
    \end{tabular}
    \end{adjustbox}
    \endgroup

    \label{tb:smac_mo_scenarios}
\end{table}

\clearpage

\renewcommand{\theequation}{F.\arabic{equation}}
\setcounter{equation}{0}
\renewcommand{\thefigure}{F.\arabic{figure}}
\setcounter{figure}{0}
\renewcommand{\thetable}{F.\arabic{table}}
\setcounter{table}{0}
\section{Experimental Details}
\label{app:detail}
In this section, we provide the experimental details of the proposed S2Q. All experiments provided in this paper are conducted on a GPU server equipped with an NVIDIA GeForce RTX 3090 GPU and an Intel Xeon Gold 6348 (2.60GHz, 28 cores) processor, running Ubuntu 20.04. In Appendix~\ref{sec:app_ctde}, we introduce popular CTDE baselines, including VDN, QMIX, and QPLEX. While we follow the original implementations and loss scaling of the prior baseline CTDE algorithms for shared hyperparameters, we focus our parameter search on the hyperparameters newly introduced in S2Q. This hyperparameter setup is provided in Appendix~\ref{sec:app_hyper}. Finally, Appendix~\ref{sec:app_exp} details various MARL methods that address the limitations of monotonic value decomposition, as well as communication-focused MARL algorithms.

\subsection{Details of Considered CTDE Baselines}
\label{sec:app_ctde}
\textbf{VDN} \citep{sunehag2017value} is a cooperative MARL method based on value factorization, decomposing the joint action-value function into a sum of individual agent value functions, enabling centralized training with decentralized execution. The global $Q$-value is defined as:

$$
Q^{\text{tot}}(s_t, \mathbf{a}_t) = \sum_{i=1}^{N} Q^i(\tau_t^i, a_t^i),
$$

where individual $Q^i$ depends only on the agent's own trajectory $\tau_t^i$ and the chosen action $a_t^i$.

\textbf{QMIX} \citep{rashid2020qmix} introduces a more flexible, non-linear function for combining individual agent utilities into a joint action-value. To ensure that maximizing each agent's local $Q^i$ aligns with maximizing the global objective, QMIX enforces a monotonicity constraint between the joint value and the individual utilities, formalized as:

$$
\frac{\partial Q^{\text{tot}}}{\partial Q^i} \geq 0, \quad \forall i,
$$

\textbf{QPLEX} \citep{wang2020qplex} advances the expressiveness of value factorization by introducing a duplex dueling architecture while still satisfying the IGM (Individual-Global-Max) property. QPLEX decomposes the $Q$-values into advantage functions and formulates IGM as:

$$
\arg\max_{\mathbf{a}} A^{\text{tot}}(\boldsymbol{\tau}, \mathbf{a}) = 
\begin{pmatrix}
\arg\max_{a^1} A^1(\tau^1, a^1), \\
\vdots \\
\arg\max_{a^N} A^N(\tau^N, a^N)
\end{pmatrix},
$$

where $A^{\text{tot}}$ and $A^i$ represent the joint and individual advantage functions, respectively. The joint $Q$-value is then constructed as:

$$
Q^{\text{tot}}(\boldsymbol{\tau}, \mathbf{a}) = \sum_{i=1}^N Q^i(\boldsymbol{\tau}, a^i) + \sum_{i=1}^n (\lambda_i(\boldsymbol{\tau}, \mathbf{a}) - 1) A^i(\boldsymbol{\tau}, a^i),
$$

with the weighting factors $\lambda_i(\boldsymbol{\tau}, \mathbf{a}) > 0$ generated via a multi-head attention mechanism, enhancing the flexibility of the model.

Throughout our experiments, results for VDN, QMIX, and QPLEX are based on the implementations provided in PyMARL2 \citep{hu2021rethinking}, an open-source MARL framework that includes implementations of diverse algorithms along with various improvements, such as TD$(\lambda)$, larger batch sizes, and the Adam optimizer. The code is available at: \url{https://github.com/hijkzzz/pymarl2}.

\subsection{Hyperparameter Setup}
\label{sec:app_hyper}
In our experiments, while we adopt the default hyperparameter configuration from PyMARL2 \citep{hu2021rethinking} for settings shared across CTDE methods, we focus on identifying effective values for the hyperparameters introduced in S2Q. The best parameters for each scenario are summarized in Table~\ref{tab:scenario-hyperparams}. According to Table~\ref{tab:scenario-hyperparams}, across a variety of tasks, $K=2$, $T=0.1$, and $\alpha=1.0$ consistently achieve the best performance. While the weighting factor $w$ exhibits some task-specific variation, values of $0.75$ or $0.9$ generally perform well. These results demonstrate the robustness and flexibility of S2Q, achieving high win rates and rapid convergence across diverse scenarios.

\begin{table}[h!]
\caption{Common $Q$-learning Hyperparameters}
\centering
\begin{tabular}{p{0.3\textwidth} p{0.2\textwidth}}
\hline
\textbf{Hyperparameter} & \textbf{Value} \\
\hline
$\epsilon$ Decay Value & 1.0 $\rightarrow$ 0.05 \\
$\epsilon$ Anneal Time & 100000 \\
Target Update Interval & 200 \\
Discount Factor $\gamma$ & 0.99 \\
Buffer Size & 5000 \\
Batch Size & 128 \\
Learning Rate & 0.001 \\
Optimizer & Adam \\
Optimizer Alpha & 0.99 \\
Optimizer Eps & 1e-5 \\
Gradient Clip Norm & 10.0 \\
Num GRU Layers & 1 \\
RNN Hidden State Dim & 64 \\
Double Q & True \\
\hline
\end{tabular}
\label{tb:common_hyperparameters}
\end{table}

\begin{table}[!h]
\caption{Scenario-specific hyperparameter setup of S2Q}
\centering
\begin{tabular}{lcccc}
\toprule
\textbf{Scenario} & $K$ & $T$ & $\alpha$ & $w$ \\
\midrule
\multicolumn{5}{l}{\textbf{SMAC-Hard+}} \\
\texttt{5m\_vs\_6m} & 2 & 0.1 & 1.0 & 0.9 \\
\texttt{MMM2} & 2 & 0.1 & 1.0 & 0.9 \\
\texttt{27m\_vs\_30m} & 2 & 0.1 & 1.0 & 0.9 \\
\texttt{corridor} & 2 & 0.1 & 1.0 & 0.75 \\
\texttt{6h\_vs\_8z} & 2 & 0.1 & 1.0 & 0.9 \\
\texttt{3s5z\_vs\_3s6z} & 2 & 0.1 & 1.0 & 0.75 \\
\midrule
\multicolumn{5}{l}{\textbf{GRF}} \\
\texttt{academy\_3\_vs\_2} & 2 & 0.1 & 1.0 & 0.75 \\
\texttt{academy\_4\_vs\_3} & 2 & 0.1 & 1.0 & 0.75 \\
\midrule
\multicolumn{5}{l}{\textbf{SMAC-Comm}} \\
\texttt{1o\_2r\_vs\_4r} & 2 & 0.1 & 1.0 & 0.9 \\
\texttt{1o\_10b\_vs\_1r} & 2 & 0.1 & 1.0 & 0.9 \\
\texttt{5z\_vs\_1ul} & 2 & 0.1 & 1.0 & 0.9 \\
\texttt{bane\_vs\_hM} & 2 & 0.1 & 1.0 & 0.9 \\
\midrule
\multicolumn{5}{l}{\textbf{SMACv2}} \\
\texttt{terran\_5\_vs\_5} & 2 & 0.1 & 1.0 & 0.75 \\
\texttt{zerg\_5\_vs\_5} & 2 & 0.1 & 1.0 & 0.75 \\
\texttt{protoss\_5\_vs\_5} & 2 & 0.1 & 1.0 & 0.75 \\
\midrule
\multicolumn{5}{l}{\textbf{SMAC-Hard (Mixture Opponent)}} \\
\texttt{5m\_vs\_6m\_Hard} & 2 & 0.1 & 1.0 & 0.9 \\
\texttt{2s3z\_Hard} & 2 & 0.1 & 1.0 & 0.9 \\
\texttt{3s\_vs\_5z\_Hard} & 2 & 0.1 & 1.0 & 0.9 \\

\bottomrule
\\[0.5em]
\end{tabular}
\label{tab:scenario-hyperparams}
\end{table}

\clearpage

\subsection{Description of Other MARL Methods for Comparison}
\label{sec:app_exp}

\textbf{QMIX} \citep{rashid2020qmix}: introduces a mixing network that combines per-agent value functions into a joint action-value under a monotonicity constraint, ensuring consistency between centralized training and decentralized execution. We base our implementation of QMIX on the following repository: \url{https://github.com/hijkzzz/pymarl2}

\textbf{WQMIX} \citep{rashid2020weighted}: extends QMIX by introducing weighted projections in the mixing process, alleviating the representational limitations of monotonicity and enabling more accurate learning of optimal joint action-values. We base our implementation of WQMIX on the following repository: \url{https://github.com/hijkzzz/pymarl2}

\textbf{RiskQ} \citep{shen2023riskq}: introduces a quantile-based value factorization that models joint return distributions as weighted mixtures of per-agent utilities, satisfying the risk-sensitive IGM principle and enabling coordination under uncertainty. The official code can be found at: \url{https://github.com/xmu-rl-3dv/RiskQ}

\textbf{PAC} \citep{zhou2022pac}: leverages counterfactual predictions of optimal joint actions to provide assistive information for value factorization, using a novel counterfactual loss and variational encoding to improve coordination under partial observability. The official code can be found at: \url{https://github.com/hanhanAnderson/PAC-MARL}

\textbf{FOP} \citep{zhang2021fop}: factorizes the optimal joint policy in maximum-entropy MARL into individual actor-critic policies, with theoretical guarantees of convergence to the global optimum. The official code can be found at: \url{https://github.com/PKU-RL/FOP-DMAC-MACPF?tab=readme-ov-file}

\textbf{DOP} \citep{wang2020dop}: integrates value function decomposition into multi-agent actor-critic methods, enabling efficient off-policy learning while addressing credit assignment and centralized-decentralized mismatch, with guarantees of convergence. The official code can be found at: \url{https://github.com/TonghanWang/DOP}

\textbf{MARR} \citep{pmlr-v235-yang24c}: improves sample efficiency in MARL by introducing a reset strategy and data augmentation, enabling high-replay training in parallel environments with fewer environment interactions. The official code can be found at: \url{https://github.com/CNDOTA/ICML24-MARR}

\textbf{MASIA} \citep{guan2022efficient}: enables efficient multi-agent communication by aggregating received messages into compact, task-relevant representations using a permutation-invariant encoder and self-supervised objectives, improving coordination and decision-making. The official code can be found at: \url{https://github.com/chenf-ai/Multi-Agent-Communication-Considering-Representation-Learning}

\textbf{NDQ} \citep{wang2019learning}: combines value function factorization with communication minimization, enabling agents to act independently most of the time while selectively exchanging messages using information-theoretic regularizers to improve coordination. The official code can be found at: \url{https://github.com/TonghanWang/NDQ}

\textbf{MAIC} \citep{yuan2022multi}: enables agents to generate targeted incentive messages that directly influence teammates’ value functions, promoting efficient explicit coordination while remaining compatible with different value function factorization methods. The official code can be found at: \url{https://github.com/mansicer/MAIC}

\textbf{CAMA} \citep{shao2023complementary}: uses complementary attention to enhance high-contribution entities and compress low-contribution ones, addressing distracted attention and limited observability, thereby improving coordination in cooperative MARL. The official code can be found at: \url{https://github.com/thu-rllab/CAMA}

\textbf{T2MAC} \citep{sun2024t2mac}: enables agents to communicate selectively with trusted partners and integrate information at the evidence level, improving coordination and communication efficiency in cooperative MARL. The official code can be found at: \url{https://github.com/ZangZehua/T2MAC}

\clearpage

\renewcommand{\theequation}{G.\arabic{equation}}
\setcounter{equation}{0}
\renewcommand{\thefigure}{G.\arabic{figure}}
\setcounter{figure}{0}
\renewcommand{\thetable}{G.\arabic{table}}
\setcounter{table}{0}

\section{Additional Experiment Results}
\label{sec:app_add}
In this section, we present additional experiment results to further demonstrate the effectiveness of our proposed S2Q method. In Appendice~\ref{sec:app_smacv2} and \ref{sec:app_perf_smacmo}, we provide additional performance evaluation on SMACv2 \citep{smacv2} and SMAC-Hard (Mixture Opponent) \citep{deng2024smac}, while we extend S2Q to other representative CTDE baselines, VDN and QPLEX, in Appendix~\ref{sec:app_ext}.

\subsection{Performance Comparison in SMACv2}
\label{sec:app_smacv2}

\begin{figure}[h!]
    \centering
    \includegraphics[width=0.9\linewidth]{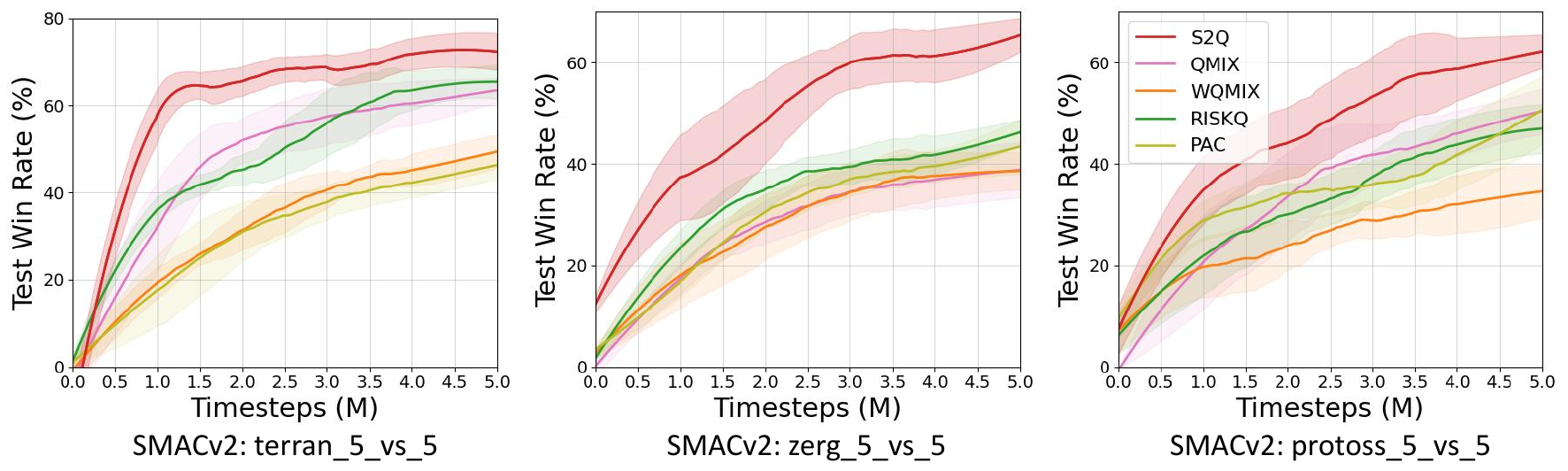}
\caption{Performance comparison: Average test win rates in the SMACv2 tasks}    \label{fig:perf_smacv2}
\end{figure}

From Fig.~\ref{fig:perf_smacv2}, which shows the performance in SMACv2 scenarios, we can observe that S2Q demonstrates superior performance over the MARL baselines across all scenarios, achieving both superior performance and faster convergence. Notably, its advantage is most pronounced in the $\texttt{zerg\_5\_vs\_5}$ and $\texttt{protoss\_5\_vs\_5}$ scenarios, where the high degree of stochasticity induced by varying ally and enemy team compositions makes the environment particularly challenging. In this setting, S2Q demonstrates its ability to effectively track the values of suboptimal actions, thereby enabling more efficient exploration and guiding the $Q^{\text{tot}}$ towards the optimal policy.

\subsection{Performance Comparison in SMAC-Hard (Mixture Opponent)}
\label{sec:app_perf_smacmo}

\begin{figure}[h!]
    \centering
    \includegraphics[width=0.9\linewidth]{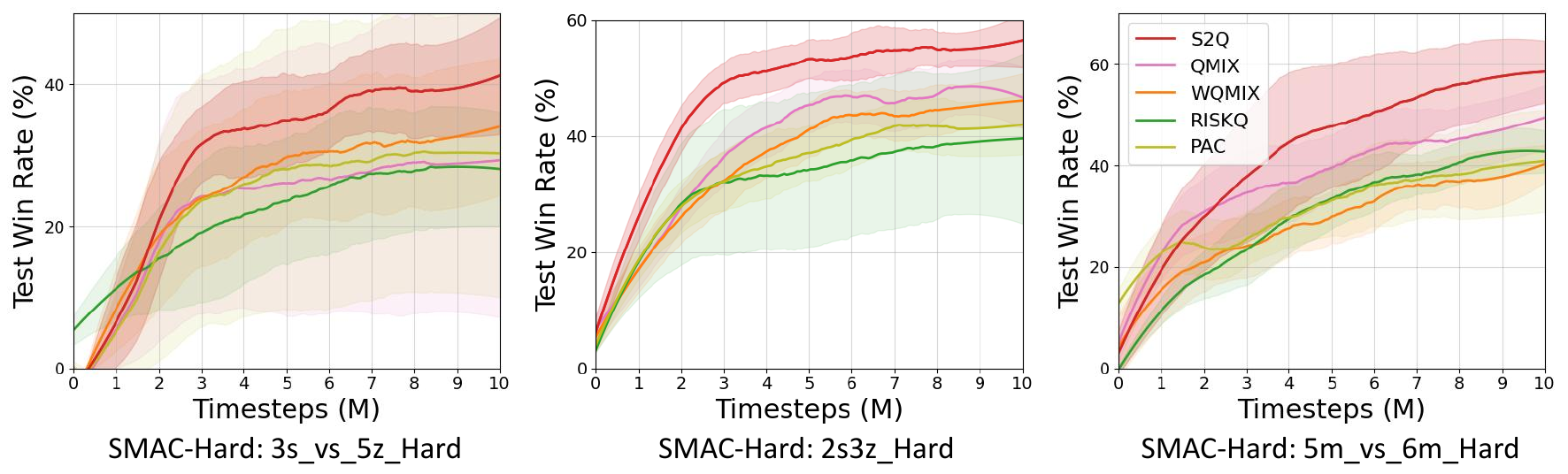}
\caption{Performance comparison: Average test win rates in the SMAC-Hard (Mixture Opponent) tasks}    \label{fig:perf_smacmo}
\end{figure}

Fig.~\ref{fig:perf_smacmo} illustrates the performance on SMAC-Hard (Mixture Opponent) scenarios, including \texttt{5m\_vs\_6m\_Hard}, \texttt{2s3z\_Hard}, and \texttt{3s\_vs\_5z\_Hard}. As mentioned in Appendix~\ref{sec:app_smac_mo}, we append “\texttt{\_Hard}” to each scenario name to distinguish from the the scenarios from the SMAC-Hard+ benchmark. According to the results, S2Q consistently outperforms all MARL baselines across every considered task, demonstrating its superiority. Notably, in the \texttt{3s\_vs\_5z\_Hard} scenario, where agents must coordinate precise hit-and-run maneuvers against opponents that exhibit diverse tactical behaviors, S2Q demonstrates outstanding adaptability. This suggests that S2Q captures subtle opponent-conditioned value structures that conventional MARL baselines fail to exploit. 

\subsection{Generality of S2Q Across CTDE Methods}
\label{sec:app_ext}

Although our primary discussion of S2Q has been in the context of WQMIX, the proposed procedure is general and readily extends to other CTDE methods, such as VDN \citep{sunehag2017value} and QPLEX \citep{wang2020qplex}. By replacing the joint action-value function $Q^{*}$ with $Q^{\text{tot}}$, S2Q can be integrated into these value-decomposition baselines without modification. Across all scenarios, we observed significant performance improvements over the corresponding baselines. Consistent with the results obtained in WQMIX, the performance gains were most evident in the challenging \texttt{6h\_vs\_8z} environment, which requires extensive exploration. These results demonstrate the practicality and robustness of S2Q, highlighting its ability to follow changes in the value landscape more closely and adapt to new optima faster than conventional approaches.

\begin{table}[!h]
\centering
\caption{Performance comparison on SMACv2 environments}
\begin{tabular}{@{}lllll@{}}
\toprule
\textbf{Scenario} & \textbf{VDN} & \textbf{VDN+S2Q} & \textbf{QPLEX} & \textbf{QPLEX+S2Q} \\ \midrule
\texttt{5m\_vs\_6m}  & 62.47 $\pm$ 11.12  & 68.25 $\pm$ 4.78 & 58.18 $\pm$ 3.52& 61.86 $\pm$ 2.65 \\
\texttt{MMM2}  & 0.00 $\pm$ 0.00  & 14.12 $\pm$ 2.43 & 50.78 $\pm$ 10.63 & 58.03 $\pm$ 8.27 \\
\texttt{27m\_vs\_30m}  & 13.75 $\pm$ 5.33 & 61.35 $\pm$ 2.33 & 47.34 $\pm$ 8.26  & 63.18 $\pm$ 5.54 \\
\texttt{Corridor}  & 56.25 $\pm$ 26.14 & 70.49 $\pm$ 13.37 & 48.62 $\pm$ 24.02 & 64.68 $\pm$ 18.51 \\
\texttt{6h\_vs\_8z}  & 8.94 $\pm$ 4.31  & 48.03 $\pm$ 9.37  & 3.44 $\pm$ 1.13 & 41.32 $\pm$ 6.23 \\
\texttt{3s5z\_vs\_3s6z}  & 47.66 $\pm$ 19.66  & 49.82 $\pm$ 10.31 & 43.19 $\pm$ 17.02 & 48.73 $\pm$ 10.46 \\
\bottomrule
\\[0.2em]
\end{tabular}

\label{tab:exp-smacv2}
\end{table}


\clearpage

\renewcommand{\theequation}{H.\arabic{equation}}
\setcounter{equation}{0}
\renewcommand{\thefigure}{H.\arabic{figure}}
\setcounter{figure}{0}
\renewcommand{\thetable}{H.\arabic{table}}
\setcounter{table}{0}

\section{Additional Ablation Study}
\label{sec:app_add_abl}

 In this section, we provide a deeper analysis on S2Q. In Appendix~\ref{sec:app_ablation_task}, we provide further analysis on the hyperparameters $K$ and $T$ in additional tasks, while in Appendix~\ref{sec:app_ablation}, provide additional ablation study on the suppression constant $\alpha$ and weighting factor $w_c$. Furthermore, in Appendix~\ref{sec:app_aux}, we analyze the role of auxiliary signals in scenarios where communication is not critical, and in Appendix~\ref{sec:app_trajectory}, provide additional examples of shift in value function along with trajectory analysis in SMAC-Comm environment. Finally, Appendix~\ref{sec:app_complexity} presents computational complexity analysis on S2Q, to show that the computational overhead from leveraging multiple sub-value functions remains moderate, compared to the significant performance gain of S2Q.

\subsection{Hyperparameter Analysis in Additional Tasks}
\label{sec:app_ablation_task}

\begin{figure}[h!]
    \centering
    \includegraphics[width=\linewidth]{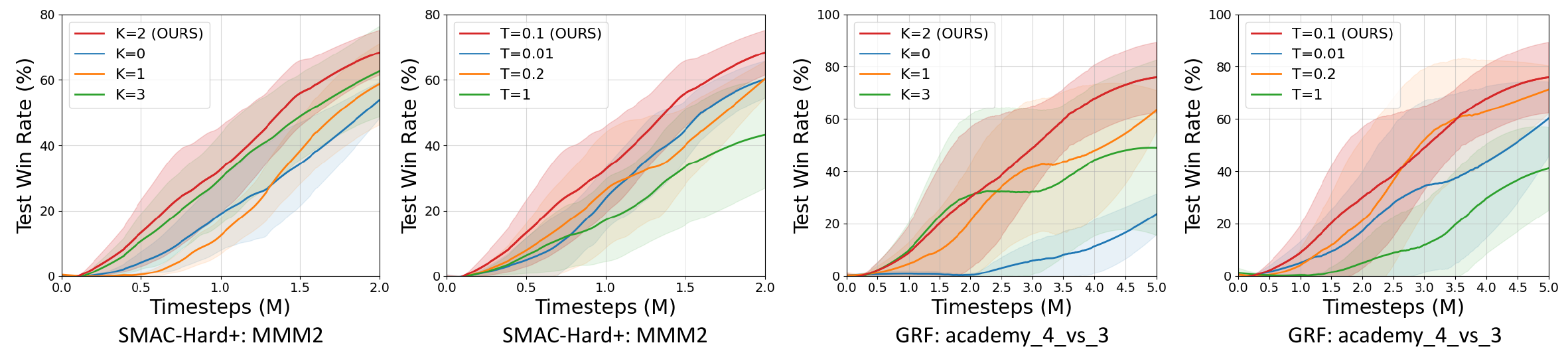}
\caption{Hyperparameter analysis. From left to right: (1) effect of $K$ in \texttt{MMM2}, (2) effect of temperature $T$ in \texttt{MMM2}, (3) effect $K$ in \texttt{academy\_4\_vs\_3}, and (4) effect of $T$ in \texttt{academy\_4\_vs\_3}}    
\label{fig:abl_app_task}
\end{figure}

To further emphasize the generality of S2Q beyond the settings examined in Section~\ref{sec:ablation}, we additionally evaluate the effects of the number of sub-value networks $K$ and the Softmax temperature $T$ in two more challenging environments: \texttt{MMM2} scenario from SMAC-Hard+ and \texttt{academy\_4\_vs\_3} scenario from GRF. 

\textbf{Number of sub-value functions $K$:}
As in earlier results, $K=2$ achieves the best overall performance in both \texttt{MMM2} and \texttt{academy\_4\_vs\_3}, offering the strongest balance between convergence speed and final win rate. In contrast, $K=0$ fails to provide meaningful exploratory diversity, while larger values such as $K=3$ introduce increased variance and mild instability. These observations further support that a moderate number of candidate sub-networks yields a favorable trade-off between diversity and stability.

\textbf{Softmax temperature $T$:}
A moderate temperature (e.g., $T=0.1$) provides the most effective balance between focused sampling and sufficient exploration. Very low temperature values ($T=0.01$) result in overly deterministic behavior early in training, suppressing useful exploration, while higher temperatures (e.g., $T=1.0$) promote excessive exploration and slow convergence. Notably, $T=0.2$ performs competitively, indicating that S2Q remains robust to reasonable variations in temperature as long as sampling remains sufficiently concentrated on promising sub-values.

Across both environments, we observe trends that closely mirror those reported earlier for \texttt{6h\_vs\_8z}, demonstrating that the behavior of S2Q is consistent and robust across domains. Overall, the additional experiments on SMAC-Hard+ and GRF confirm that S2Q maintains stable and consistent performance characteristics across diverse and challenging multi-agent environments, reinforcing its generality and robustness.

\clearpage

\subsection{Ablation Study on Additional Hyperparameters}
\label{sec:app_ablation}

\begin{figure}[h!]
    \centering
    \begin{subfigure}[b]{0.49\textwidth}
        \centering
        \includegraphics[width=0.7\textwidth]{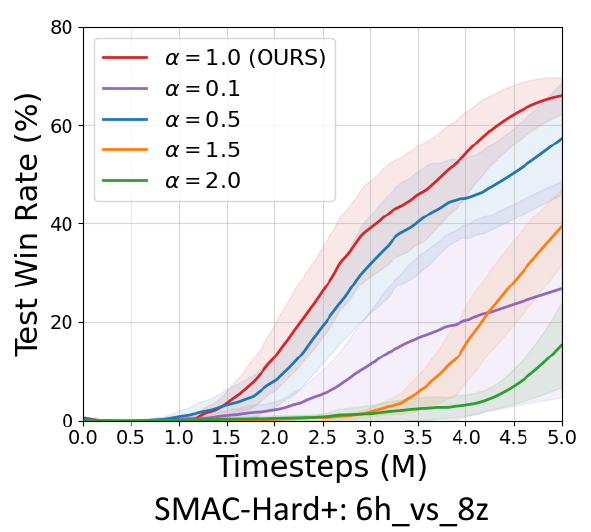}
        \caption{}
    \end{subfigure}
    \hspace{-30pt} 
    \begin{subfigure}[b]{0.49\textwidth}
        \centering
        \includegraphics[width=0.7\textwidth]{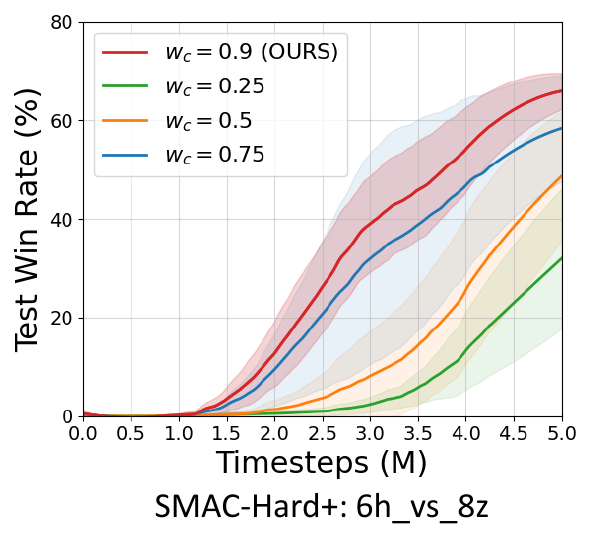}
        \caption{}
    \end{subfigure}
        \caption{Hyperparameter analysis. Effect of (a) suppression constant $\alpha$ and (b) weight factor $w_c$.}
    \label{fig:abl_app}
\end{figure}

\textbf{Suppression constant $\alpha$}: Fig.~\ref{fig:abl_app}(a) evaluates the effect of the suppression constant $\alpha \in [0.1, 0.5, 1.0, 1.5, 2.0]$, which determines how strongly the values of previously identified actions are reduced and thereby controls how far sampled suboptimal actions lie from the current optimum. The results indicate that $\alpha=1.0$ yields the best performance: this setting encourages exploration of meaningful, moderately distant sub-actions that are informative for updating the value landscape. A smaller value, $\alpha=0.5$, also performs well since it biases sampling toward nearby sub-actions that remain relevant to the optimum. In contrast, a very small value such as $\alpha=0.1$ fails to sufficiently suppress the previously selected optimal actions, preventing the discovery of meaningful suboptimal actions and resulting in a significant degradation in performance. On the other hand, larger values ($\alpha=1.5, 2.0$) substantially hurt performance: excessive suppression forces sampling of actions that are very distant from the optimum, producing updates that are less informative (and often misleading) for tracking the true optimal policy, which destabilizes learning and slows convergence. These findings suggest that a moderate suppression level is necessary to balance exploration breadth with the relevance of sampled sub-actions.

\textbf{Weighting factor $w_c$}: Fig.~\ref{fig:abl_app}(b) reports the effect of varying the weighting factor $w_c$, which determines how TD-error is scaled by reinforcing corrective signals that reduce underestimation. The results show that $w_c=0.9$ yields the highest win rates, striking a balance between effective error correction and stable value learning. A slightly smaller value, $w_c=0.75$, also performs well, indicating robustness to moderate relaxations. However, more aggressive reductions such as $w_c=0.5$ and $w_c=0.25$ significantly weaken the propagation of TD-error, leading to information loss and degraded performance. These findings highlight that maintaining a relatively high $w$ ensures that informative TD signals are consistently transmitted, while still guiding the value functions towards the optimal values.

\clearpage

\subsection{Effect of Auxiliary Supervision}
\label{sec:app_aux}

\begin{figure}[h!]
    \centering
    \includegraphics[width=0.9\linewidth]{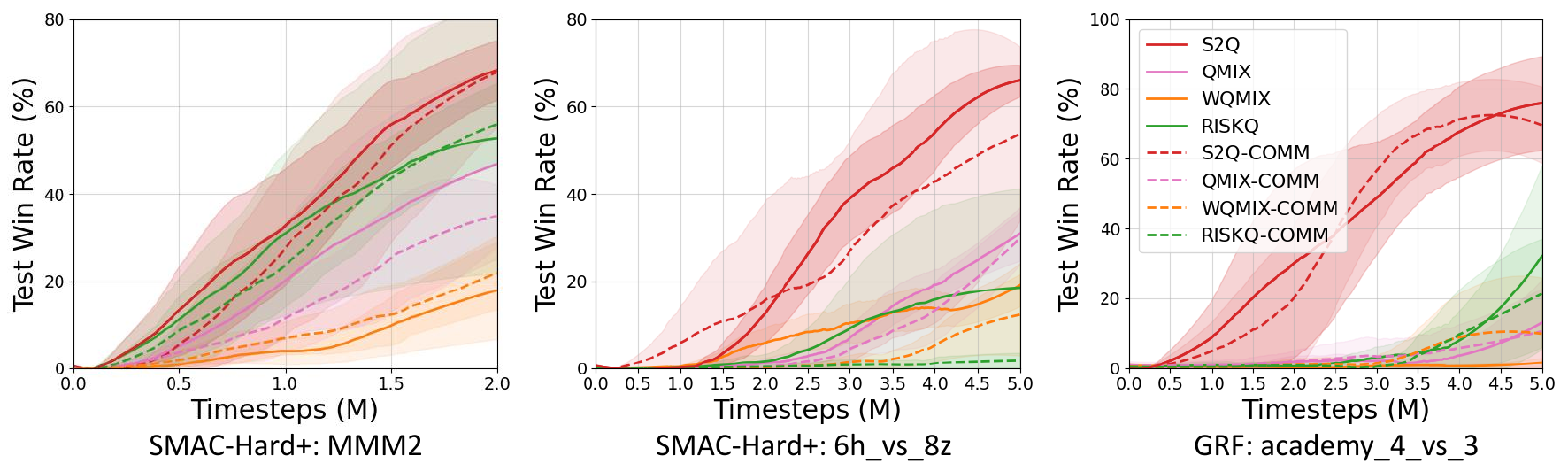}
\caption{Performance comparison: Average test win rates in the SMAC-Hard+ and GRF task for MARL algorithms and their communication variant}    \label{fig:perf_aux}
\end{figure}

To demonstrate that the performance gains of S2Q arise from the proposed successive sub-value learning, rather than the auxiliary latent information, we conduct additional experiments on SMAC-Hard+ scenarios, including \texttt{MMM2} and \texttt{6h\_vs\_8z}) and \texttt{academy\_4\_vs\_3} scenario from GRF. In these environments, we equipped all non-communication baselines with the same encoder-decoder architecture used in S2Q-Comm, and denote these variants by appending “\texttt{-Comm}”. This setup ensures that every algorithm receives an identical latent signal derived from reconstructing the global state, thereby isolating the effect of the auxiliary representations from the effect of S2Q’s successive sub-value learning.

The results, presented in Fig.~\ref{fig:perf_aux} show that injecting these latent-based auxiliary signals has minimal influence on performance across the considered tasks where communication is not critical for task success. In some cases, performance even degrades due to the latent introducing additional noise. Similar trends appear in the GRF \texttt{academy\_4\_vs\_3} environment, where communication is not the primary bottleneck. Notably, S2Q-Comm consistently outperforms all baselines, including their “-Comm” variants. These findings confirm that the advantage of S2Q-Comm does not originate from the auxiliary state-reconstruction signal or the latent variable itself. Instead, the gains stem from S2Q’s core mechanisms.

\subsection{Shift in Value Function and Additional Trajectory Analysis}
\label{sec:app_trajectory}

\begin{figure}[h!]
    \centering
    \includegraphics[width=0.9\linewidth]{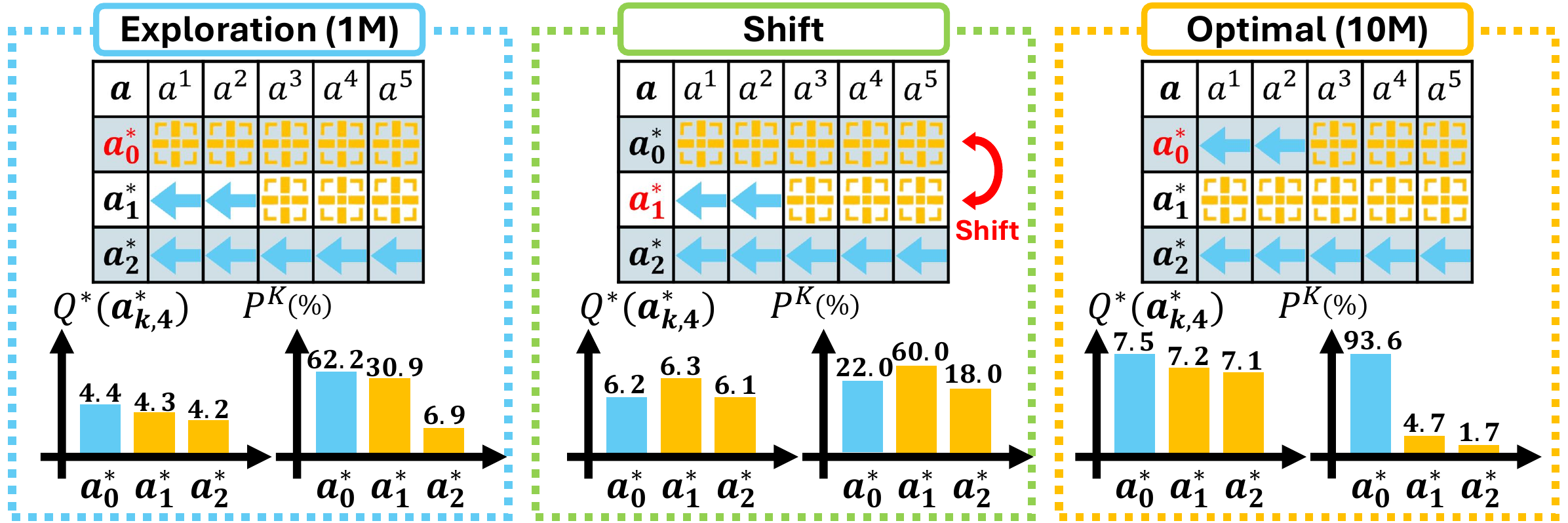}
\caption{Value function shift in \texttt{5m\_vs\_6m\_Hard})}
\label{fig:ta_mo}
\end{figure}

In addition to the motivation discussed in Section~\ref{sec:motivation} and the trajectory analysis presented in Section~\ref{sec:ta}, we provide further examples of shifts in the value function. Specifically, we examine value-function changes in SMAC-Hard (Mixture Opponent) \texttt{5m\_vs\_6m\_Hard} and GRF \texttt{academy\_4\_vs\_3}, with results depicted in Fig.~\ref{fig:ta_mo} and \ref{fig:ta_grf}. In \texttt{5m\_vs\_6m\_Hard}, agents initially focus on greedy attacks to secure easy eliminations, but the value function subsequently learns that repositioning to form favorable formations yields higher long-term returns, causing the optimal action to shift from \textit{attack} to \textit{move}.

\begin{figure}[h!]
    \centering
    \includegraphics[width=0.9\linewidth]{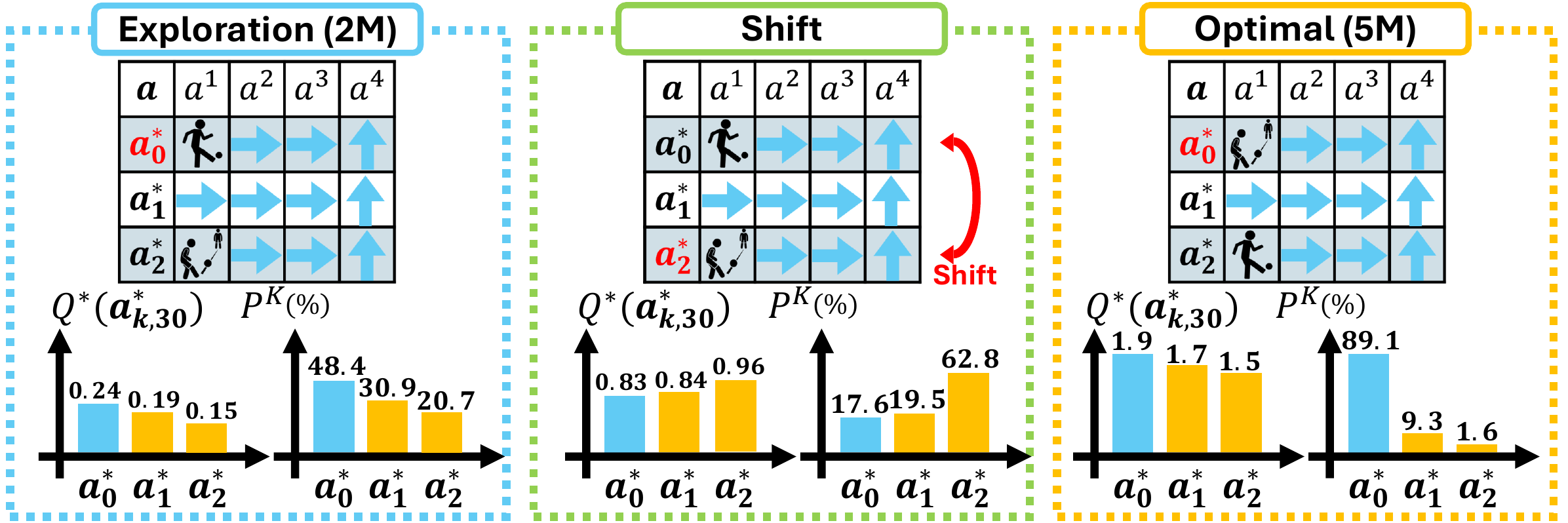}
\caption{Value function shift in \texttt{academy\_4\_vs\_3}}
\label{fig:ta_grf}
\end{figure}

Similarly, in \texttt{academy\_4\_vs\_3}, agents first attempt low-success long shots, but eventually discover that approaching the goal or passing increases scoring probability, shifting the optimal action from \textit{shoot} to \textit{pass}. These examples further highlight S2Q's capability to track and adapt to evolving optimal behaviors across diverse environments.

\begin{figure}[h!]
    \centering
    \includegraphics[width=0.9\linewidth]{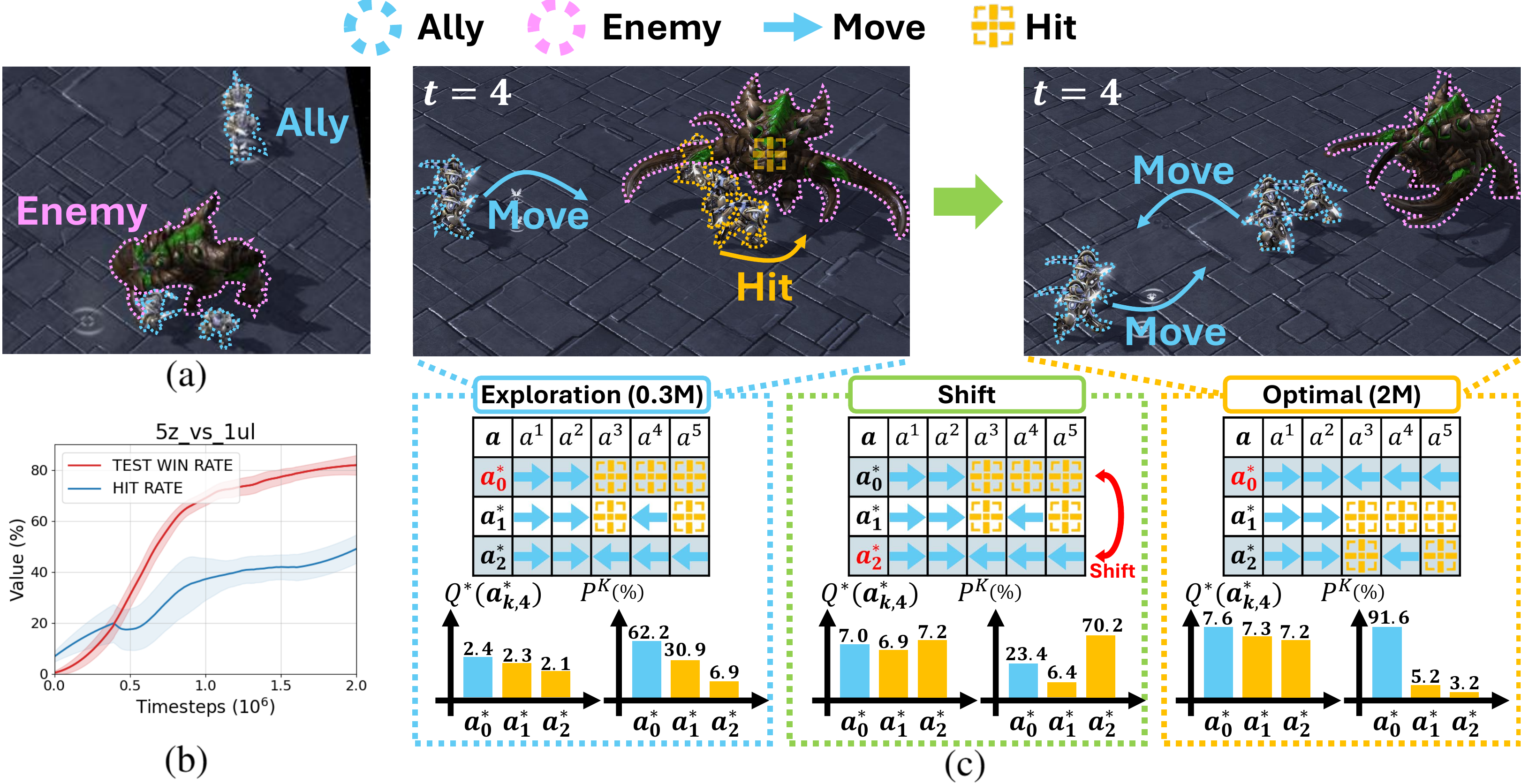}
\caption{Training behavior of S2Q. Left: training step = 0.3M. Right: training step = 2M}
\label{fig:ta_comm}
\end{figure}

In addition, we further investigate the training dynamics of S2Q ($K=2$), we analyze the \texttt{5z\_vs\_1ul} scenario from SMAC-Comm, where the agents must learn to coordinate based on the latent information $z$. \texttt{6h\_vs\_8z}. Unlike scenarios that reward aggressive engagement, the optimal strategy in \texttt{5z\_vs\_1ul} is to \textit{move} and group with the distant ally, while direct \textit{attack} actions often lead to heavy losses due to the enemy’s superior strength. As shown in Fig.\ref{fig:ta_comm}(a), the action distribution reveals that agents initially favor \textit{attack}, reflecting a local optimum that prioritizes immediate engagement. Consequently, the attack ratio increases in the early stages of training. However, as training progresses, S2Q enables agents to recognize the risks of premature aggression, resulting in a gradual decline in attack frequency and a corresponding rise in the use of \textit{move}. Fig.\ref{fig:ta_comm}(b) illustrates how this shift is accompanied by improvements in survival time and overall win rate.

Fig.~\ref{fig:ta_comm}(c) highlights how S2Q adapts to this changing notion of optimality. Early in training, \textit{attack} is tracked as the dominant action ($\mathbf{a}_{0,t}^*$), while \textit{move} is considered suboptimal. As the value landscape evolves, $Q^*$ reassigns higher returns to \textit{move}, prompting S2Q to increase its execution frequency through the Softmax-based behavior policy. This dynamic reallocation allows $Q_0^{\text{sub}}$ to align with the true optimal strategy, thereby reducing reliance on local optima.
These findings emphasize that even in environments requiring implicit coordination and communication, such as \texttt{5z\_vs\_1ul}, S2Q effectively leverages its successive sub-value component to explore, track, and ultimately converge to the optimal policy.

\clearpage

\subsection{Computational Complexity}
\label{sec:app_complexity}
S2Q introduces additional computational overhead due to its use of multiple sub-value functions and the encoder-decoder architecture for coordinating $k$-selection. To evaluate this overhead and demonstrate that it remains minimal relative to performance benefits, we compare S2Q and QMIX in controlled experiments on an NVIDIA GeForce RTX 3090 GPU and an Intel Xeon Gold 6348 (2.60 GHz, 28 cores) processor running Ubuntu 20.04 with PyTorch. In \texttt{MMM2} and \texttt{27m\_vs\_30m} scenarios where QMIX achieves competitive performance, we measure the time required to reach a 50\% win rate. Table~\ref{tab:computation} summarizes the results. According to Table~\ref{tab:computation}, S2Q incurs only a moderate increase in computation, approximately 8.4\% for \texttt{MMM2} and 3.5\% for \texttt{27m\_vs\_30m}. More importantly, due to faster learning enabled by successive sub-value function tracking, S2Q reaches the target win rate of 50\% significantly faster than QMIX, demonstrating that the modest overhead is outweighed by the improved convergence speed and overall performance.

\begin{table}[!h]
    \centering
    \caption{Comparison of training time between S2Q (for different $K$) and QMIX.}

    \begin{tabular}{@{}l l cccc@{}}
        \toprule
        \textbf{Scenario} & \textbf{Metric} 
        & \textbf{S2Q (K=1)} 
        & \textbf{S2Q (K=2)} 
        & \textbf{S2Q (K=3)} 
        & \textbf{QMIX} \\
        \midrule

        \multirow{3}{*}{\texttt{MMM2}} 
        & Time / 1M  & 122.0 min & 129.3 min & 136.5 min & 119.3 min \\
        & $T$ at 50\% win & 1.48M & 1.28M & 1.67M & 2.16M \\
        & Time at 50\% win & 180.6 min & 165.5 min & 228.0 min & 257.7 min \\
        \midrule

        \multirow{3}{*}{\texttt{27m\_vs\_30m}} 
        & Time / 1M & 241.0 min & 245.1 min & 258.0 min & 236.8 min \\
        & $T$ at 50\% win & 1.39M & 1.15M & 1.57M & 1.86M \\
        & Time at 50\% win & 335.0 min & 281.9 min & 405.1 min & 440.4 min \\

        \bottomrule
    \end{tabular}
    \label{tab:computation}
\end{table}

In addition to time-wise comparison, we also evaluate GPU memory consumption to quantify the overhead introduced by maintaining multiple sub-value networks. Table~\ref{tab:computation_mem} reports peak GPU usage for S2Q with different values of $K$ and for QMIX. According to the results, memory usage increases moderately with $K$ due to the additional sub-networks, but the overall overhead remains reasonable: in \texttt{MMM2}, S2Q with $K=2$ requires only 248~MB more memory than QMIX, and even $K=3$ remains within a 19\% increase. A similar pattern holds in the more computationally demanding \texttt{27m\_vs\_30m} scenario, where S2Q$(K=2)$ uses 336~MB more memory than QMIX. The incremental cost from $K=2$ to $K=3$ is also modest, considering the added representational capacity. Importantly, this memory overhead scales predictably and linearly with $K$.

Combined with the substantial reduction in training time reported in Table~\ref{tab:computation}, these results demonstrate that S2Q achieves significant improvements in learning efficiency while incurring only limited computational and memory overhead. This confirms that the successive sub-value learning framework offers a favorable trade-off between memory usage and performance gains, even in large-scale SMAC scenarios.

\begin{table}[!h]
    \centering
    \caption{{Comparison of memory usage between S2Q (for different $K$) and QMIX.}}

    \begingroup
    \begin{tabular}{@{}l l cccc@{}}
        \toprule
        \textbf{Scenario} & \textbf{Metric} 
        & \textbf{S2Q (K=1)} 
        & \textbf{S2Q (K=2)} 
        & \textbf{S2Q (K=3)} 
        & \textbf{QMIX} \\
        \midrule

        \texttt{MMM2}
        & Perk GPU usage (MB)  & 1967.0 MB & 2087.0 MB & 2188.0 MB & 1839.0 MB \\

        \midrule

        \texttt{27m\_vs\_30m}
        & Perk GPU usage (MB) & 3665.0 MB & 3761.0 MB & 4076.0 MB & 3425.0 MB \\

        \bottomrule
    \end{tabular}
    \endgroup

    \label{tab:computation_mem}
\end{table}

\clearpage

\end{document}